\documentclass[preprint,12pt]{elsarticle}

\usepackage[utf8]{inputenc} 
\usepackage[T1]{fontenc}    
\usepackage{booktabs}       
\usepackage{amsfonts}       
\usepackage{nicefrac}       
\usepackage{microtype}      
\usepackage{xcolor}         
\usepackage{latexsym}
\usepackage{multirow}
\usepackage{makecell}
\usepackage{enumitem}
\usepackage{graphicx}
\usepackage{amsmath}
\usepackage{amssymb}
\usepackage{setspace}
\usepackage[algo2e]{algorithm2e}
\usepackage{algorithm}
\usepackage{algorithmicx}
\usepackage{algpseudocode}
\usepackage{subfigure}
\usepackage{latexsym}
\usepackage{inconsolata}
\usepackage{bm}
\usepackage{wrapfig}
\usepackage{hyperref}
\usepackage{url}
\usepackage{pifont} 

\usepackage{amsmath}
\usepackage{amssymb}
\usepackage{mathtools}
\usepackage{amsthm}
\usepackage{multirow}
\usepackage{pifont}
\usepackage{booktabs}
\usepackage{makecell}
\usepackage{bm}
\usepackage{colortbl}
\usepackage{wrapfig}
\usepackage{graphicx}
\usepackage{tablefootnote}

\usepackage{enumitem}
\setlist[itemize]{leftmargin=*, labelindent=0pt}

\theoremstyle{plain}
\newtheorem{theorem}{Theorem}[section]
\newtheorem{proposition}{Proposition}[section]
\newtheorem{lemma}{Lemma}[section]

\theoremstyle{definition}
\newtheorem{definition}[theorem]{Definition}

\theoremstyle{remark}

\newcommand{\cmark}{\ding{51}}%
\newcommand{\xmark}{\ding{55}}%

\colorlet{normalcolor}{white}
\colorlet{tablerowcolor}{gray!20} 
\newcommand{\belowcolormidrule}{
     \arrayrulecolor{normalcolor}
     \specialrule{\aboverulesep}{0pt}{0pt}%
     \arrayrulecolor{black}\specialrule{\lightrulewidth}{0pt}{0pt}%
     \arrayrulecolor{tablerowcolor}\specialrule{\belowrulesep}{0pt}{0pt}%
     \arrayrulecolor{black}}
\newcommand{\abovecolormidrule}{
     \arrayrulecolor{tablerowcolor}
     \specialrule{\aboverulesep}{0pt}{0pt}%
     \arrayrulecolor{black}\specialrule{\lightrulewidth}{0pt}{0pt}%
     \arrayrulecolor{normalcolor}\specialrule{\belowrulesep}{0pt}{-2.5pt}%
     \arrayrulecolor{black}
     }
\newcommand{\colorbottomrule}{\arrayrulecolor{tablerowcolor}\specialrule{\aboverulesep}{0pt}{0pt}%
            \arrayrulecolor{black}\specialrule{\heavyrulewidth}{0pt}{\belowbottomsep}}%

\journal{Neural Networks}

\begin{document}

\begin{frontmatter}

\title{Parallel Training in Spiking Neural Networks}

\author[1,2]{Yanbin Huang}
\ead{huangyanbin2023@ia.ac.cn}

\author[1]{Man Yao}

\author[1]{Yuqi Pan}

\author[3]{Changze Lv}

\author[1]{Siyuan Xu}

\author[3]{Xiaoqing Zheng}

\author[1]{Bo Xu}

\author[1]{Guoqi Li\texorpdfstring{\corref{cor1}}{}}
\ead{guoqi.li@ia.ac.cn}

\cortext[cor1]{Corresponding author}

\affiliation[1]{
organization={Institute of Automation},
addressline={Chinese Academy of Sciences},
city={Beijing},
country={China},
}
\affiliation[2]{
organization={School of Future Technology},
addressline={University of Chinese Academy of Sciences},
city={Beijing},
country={China},
}
\affiliation[3]{
organization={School of Computer Science},
addressline={Fudan University},
city={Shanghai},
country={China},
}

\begin{abstract}
The bio-inspired integrate–fire–reset mechanism of spiking neurons constitutes the foundation for efficient processing in Spiking Neural Networks (SNNs). Recent progress in large models demands that spiking neurons support highly parallel computation to scale efficiently on modern GPUs. This work proposes a novel functional perspective that provides general guidance for designing parallel spiking neurons. We argue that the reset mechanism, which induces complex temporal dependencies and hinders parallel training, should be removed. However, any such modification should satisfy two principles: \romannumeral1) preserving the functions of reset as a core biological mechanism; and \romannumeral2) enabling parallel training without sacrificing the serial inference ability of spiking neurons, which underpins their efficiency at test time. To this end, we identify the functions of the reset and analyze how to reconcile parallel training with serial inference, upon which we propose a dynamic decay spiking neuron. We conduct comprehensive testing of our method in terms of: \romannumeral1) Training efficiency and extrapolation capability. On 16k-length sequences, we achieve a 25.6× training speedup over the pioneering parallel spiking neuron \citep{fang2023parallel}, and our models trained on 2k-length can stably perform inference on sequences as long as 30k. \romannumeral2) Generality. We demonstrate the consistent effectiveness of the proposed method across five task categories (image classification, neuromorphic event processing, time-series forecasting, language modeling, and reinforcement learning), three network architectures (spiking CNN/Transformer/SSMs), and two spike activation modes (spike/integer activation). \romannumeral3) Energy consumption. The spiking firing of our neuron is lower than that of vanilla and existing parallel spiking neurons.

\end{abstract}

\begin{keyword}
Brain-inspired Computing \sep Neuromorphic Computing \sep Spiking Neural Networks \sep Parallel Training \sep Efficient Training \sep Spiking Neuron

\end{keyword}
\end{frontmatter}

\section{Introduction}\label{sec:introduction}
Spiking neurons incorporate information across spatial and temporal domains into a membrane potential, i.e., the neuronal state. If this potential surpasses a threshold, the neuron fires a spike and the potential is reset; otherwise, it decays \cite{maass1997networks}. Thus, SNNs exhibit spike-based event-driven dynamics: sparse accumulations occur only upon spike transmissions between neurons, while the network stays idle otherwise \cite{roy2019towards}. Deploying SNNs on neuromorphic hardware \cite{merolla2014million,davies2018loihi,pei2019towards} yields significant energy savings. For example, the asynchronous sensing-computing neuromorphic chip Speck consumes merely 0.42 mW at idle, and its dynamic power under typical vision scenarios can be kept within the mW range \cite{yao2024spike}.

Directly training large-scale SNNs has long been a core challenge in the field. The progress can be viewed in three stages. \romannumeral1) Trainability under spike-based communication: surrogate‐gradient methods \cite{wu2018spatio,neftci2019surrogate} were proposed to handle the non-differentiable spike activation function, so that SNNs can be trained with backpropagation algorithm. \romannumeral2) Going deeper without performance loss: to reduce accuracy degradation in deeper SNNs, researchers introduced spiking residual connections \cite{fang2021deep,hu2024advancing}, novel architectures \cite{zhou2023spikformer,yao2023sdsa,yao2024spikedriven}, various normalization methods \cite{zheng2021going}, and training optimization methods \cite{NEURIPS2021_c4ca4238,NEURIPS2022_010c5ba0}. \romannumeral3) Efficient training under complex spatiotemporal dynamic constraints: the goal is to study how to efficiently train larger SNNs under longer sequences, laying the foundation for directly training large spiking models.

Regarding the challenge mentioned in the third stage above, the reset mechanism prevents parallel training, which makes SNN training very costly. One line of work keeps reset but speeds up training by decoupling spatial and temporal dependencies, for example by dropping temporal dependence during backpropagation \cite{xiao2022ottt, meng2023sltt}, by letting only a subset of neurons carry temporal information \cite{hu2024temporalreversible, xu2025pararevsnn}, or by using single-step pretraining followed by multi-step fine-tuning \cite{lin2024rethinking, yao2023attention}. Another line of work removes reset. PSN \cite{fang2023parallel} first took this direction and added a learnable parameter matrix along the time dimension to compensate for the role of reset. Some subsequent studies \cite{li2024parallel,su2024snn} have improved upon PSN, but the resulting models lose the serial inference property inherent to vanilla spiking neurons, which enables efficient computation and stable extrapolation potential at test time. Another idea is to remove reset and approximate the membrane potential of vanilla spiking neuron \cite{chen2025time,shen2025spikingssms,feng2025fpt}; this path is limited because the best possible performance cannot exceed that of the approximated neuron. 

Given the popularity of large foundation models \citep{bommasani2021opportunities,scaling_law}, an in-depth understanding of spiking neuron parallelization is imperative for SNNs aiming to scale alongside them. To this end, this work takes a novel functional perspective to analyze what constitutes a good design for parallel training in SNNs. \romannumeral1) We begin by focusing on the reset of vanilla spiking neurons, identifying its functions as introducing nonlinearity and controlling the membrane potential, which enhances the temporal dynamics of spiking neurons and enables the modulation of the membrane potential. We also highlight the drawbacks of the reset mechanism, including its inability to adequately fulfill the data-dependent adaptive membrane potential update and its hindrance in parallel training. \romannumeral2) We examine the general conditions under which spiking neurons support both parallel training and serial inference: their outputs depend solely on past inputs, allowing stepwise temporal iteration while supporting parallel computations across timesteps. These conditions serve as guiding principles for designing parallelized spiking neurons.

Based on these insights, we propose a dynamic decay spiking neuron with a causal convolution structure. Our approach can perform the functions of the reset mechanism more flexibly and thoroughly, while also supporting both parallel training and serial inference. We conduct comprehensive experiments to evaluate the proposed method, and the results demonstrate that our model outperforms existing parallel spiking neuron models in terms of training efficiency, extrapolation capability, generality, and energy efficiency. The key contributions of this work are as follows:

\begin{itemize}
\item \textbf{A Novel Functional View.} Parallel training in spiking neural networks is not merely about replacing the reset mechanism with a seemingly effective technique. Instead, it requires a systematic analysis of how the functions of reset are preserved or enhanced by the modification, and how parallel and serial modes can be made compatible under this change, which helps us understand the limitations of existing approaches.
\item \textbf{Design under an Insightful Guideline.} Inspired by the functional perspective, we propose a dynamic decay spiking neuron that implements functions better than reset, while supporting parallel training and remaining compatible with serial inference.
\item \textbf{Generality.} Our method demonstrates consistently competitive performance across various network architectures and tasks, while also exhibiting training efficiency, stable extrapolation, and energy benefits.
\end{itemize}

\section{Related Work}
\subsection{Spiking Neurons}
The transmission of electrical signals in biological neurons can be modeled with differential equations. Common spiking neuron models include Hodgkin-Huxley neurons \cite{hodgkin1952quantitative}, Leaky Integrate-and-Fire (LIF) neurons \cite{abbott1999lapicque}, Izhikevich neurons \cite{izhikevich2003simple}, etc. Among these, LIF neurons are the preferred choice for training deep SNNs due to their simplicity \cite{fang2021deep}. Currently, the two main techniques for addressing the non-differentiability issue in deep SNNs are converting an artificial neural network (ANN) into its SNN counterpart \cite{han2020rmp,bu2022optimal}, i.e., ANN-to-SNN, and direct training methods \cite{wu2018spatio, neftci2019surrogate, yao2021temporal,yao2023attention} which use surrogate gradients to implement backpropagation through time. In this paper, we focus on the latter approach. As spiking neural networks are being applied to more sequential processing tasks \citep{wang2024spikevoice, bal2024spikingbert, lv2024efficient}, how to strike a balance between constructing more complex spiking neurons and enabling efficient training has become an increasingly important concern.

\subsection{Parallel Training in SNNs}
Deep learning greatly benefits from large-scale parallel computing implemented on GPUs \citep{jeon2021deep}. To realize parallel training in SNNs, existing methods mainly use parallelizable modules to directly replace the reset mechanism or approximate the membrane potential of vanilla spiking neurons. For the former, PSN \cite{fang2023parallel} pioneers by introducing a learnable parameter matrix. In subsequent works, the alternatives focus primarily on the update method of membrane potential \cite{yarga2023accelerating, li2024parallel, su2024snn, xue2025multiplication} and the design of firing functions \cite{huang2024prf, chen2024pmsn, shen2025spikepack, bal2025p-spikessm}. However, most of them either abandon the inherent serial inference characteristics of vanilla spiking neurons or fail to fully preserve the functions of the reset mechanism. For the latter, the approximation methods for membrane potential range from a simple Bernoulli spike emission condition \cite{chen2025time}, a pre-trained surrogate dynamic network \cite{shen2025spikingssms}, to fixed-point iteration \cite{feng2025fpt}. Moreover, Spike-SSM \cite{zhong2024spike} deconstructs the membrane potential and iteratively resolves the output spikes through parallel max-min boundary compression. Overall, without modifying the intrinsic neuron structure, these methods do not surpass vanilla spiking neurons in performance.

\subsection{Decay Mechanism in Spiking Neuron} 
For SNNs, the decay factor, usually termed as the membrane time constant in LIF neurons, implies a limitation on expressiveness due to its fixed nature. PLIF \cite{fang2021incorporating} improves neuronal dynamics by making the decay factor learnable. Subsequent methods parameterize the decay factor via adjusting the parameter expression \cite{fang2023parallel, kosta2023adaptive, shi2023learnable, dan2025adaptive, zhang2025dalif}, integrating bidirectional parameters \cite{su2024snn}, introducing a complementary bypass \cite{huang2024clif}. In addition, some studies apply decay to the firing threshold \cite{yin2021accurate, bittar2022surrogate}. In dendritic neuron modeling, DH-LIF \citep{zheng2024temporal} learns multi-timescale dynamics by introducing heterogeneous decay factors across different dendritic branches. However, the decay factor remains static after training. In recent works, gating mechanisms \cite{yao2022glif,wang2024gated}, adaptive membrane time constant \cite{zhang2025balif} or self-connection circuit \cite{wang2024autaptic} have been employed to capture various biological features and enhance adaptiveness. What they have in common is that after training, the decay factor still changes with variations in input, membrane potential, and output spikes. This data-dependent paradigm inspires us to delve deeper into dynamic decay that is solely related to input.

\section{A Functional View of Parallelizing Spiking Neurons}\label{sec: functional view of parallelizing spiking neurons}

Removing the reset mechanism makes spiking neurons trainable in parallel. To understand what this change truly does, we need to answer two basic questions: \textbf{\romannumeral1) What is the function of reset; \romannumeral2) Under what conditions can spiking neurons simultaneously support parallel training and serial inference.} The first question helps us identify the functional deficiencies of prior work in reconstructing spiking neurons, thereby motivating strategies to compensate for that function, or even improve upon it. The second question is fundamental to the efficient inference of SNNs.


\subsection{Reset Mechanism and Its Function}

\textbf{Hard and Soft Reset.} In biological neurons, the depolarized membrane potential is restored to the resting state after the soma fires a spike \cite{luo2020principles}. Spiking neurons abstract the neuronal dynamics described above. Considering the trade-off between bio-plausibility and computational efficiency, the most widely used spiking neuron is the LIF, whose discrete iterative form is as follows \cite{wu2018spatio}:
\begin{align}
    H_t&=\beta V_{t-1}+(1-\beta)X_t, \label{eq:LIF_charge}\\
    S_t&=\Theta(H_t-V_\text{th}), \label{eq:LIF_fire}\\
    V_t&=
    \begin{cases}
    H_t(1-S_t)+V_{\text{reset}}S_t, &\text{hard reset}\\
    H_t-V_{\text{th}}S_t, &\text{soft reset}
    \end{cases}. \label{eq:LIF_reset}
\end{align}
\begin{wrapfigure}{r}{0.45\textwidth}
  \centering
  \vspace{-12pt}
  \includegraphics[width=0.45\textwidth]{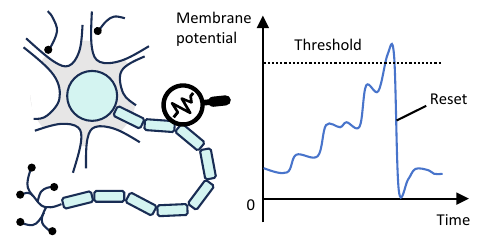}
  \vspace{-10pt}
  \caption{Illustration of a biological neuron (left) and the reset mechanism in neuronal dynamics (right).}
  \vspace{-12pt}
  \label{fig: biological neuron}
\end{wrapfigure}
In Eq. \ref{eq:LIF_charge}, the current input $X_t$ is integrated with the membrane potential $V_{t-1}$ from last timestep, and the decay factor $\beta=1-\frac{1}{\tau_m}$, where $\tau_m$ is membrane time constant. In Eq. \ref{eq:LIF_fire}, the Heaviside step function $\Theta(x)=1$ when $x\geq 0$, i.e. the membrane potential $H_t$ exceeds the threshold $V_\text{th}$, indicating that a spike is fired; otherwise, it is set to 0. According to how the membrane potential is regulated based on output spikes, reset can be generally categorized into hard and soft reset as depicted in Eq. \ref{eq:LIF_reset}. In hard reset, the charged membrane potential $H_t$ will be set to a constant $V_{\text{reset}}$ if a spike is fired, otherwise it will remain unchanged. $V_{\text{reset}}$ is commonly set to 0 for simplicity. In contrast, soft reset subtracts $H_t$ by $V_{\text{th}}$ when a spike is fired.

\textbf{Functions of Reset Mechanism.} The first function is to \textbf{introduce nonlinearity}. Specifically, the reset mechanism enriches the temporal dynamics of spiking neurons by establishing the following nonlinear relationship between the membrane potential and the input:
\begin{definition}
If the expression $H_t=g(X_1,X_2,...,X_t)$ is not a linear equation, the hidden state with respect to the inputs is considered nonlinear. \label{non-linearity}
\end{definition}
\noindent \textit{Remark}: Without reset, Eq. \ref{eq:LIF_charge} can be expanded into a linear form with respect to input.
\vspace{-5pt}
\begin{align}
    &H_t=\sum_{i=1}^t\beta^{t-i}(1-\beta)X_i.
\end{align}
In contrast, both hard and soft reset insert the firing function $f(.)$ into the iteration of membrane potential at two adjacent timesteps. Taking hard reset as an example, if letting $V_{\text{reset}}=0$ and combining Eq. \ref{eq:LIF_charge} and Eq. \ref{eq:LIF_reset}, we will derive one-step iteration form of the membrane potential. 
\begin{align}
    &H_t=\beta(1-f(H_{t-1}))H_{t-1}+(1-\beta)X_t.\label{eq:LIF_iter}
\end{align}
Obviously, it cannot be transformed into an input-dependent linear equation. This is similar for soft reset as well. Therefore, we conclude that reset introduces nonlinearity, and several parallel spiking neuron designs \cite{fang2023parallel, yarga2023accelerating, bal2025p-spikessm} ignore this role.

The second function is to \textbf{control membrane potential.} The reset mechanism constrains the membrane potential within a suitable range and averts ceaseless spike firing. For clarity, we quantitatively describe the control ability over the membrane potential as $\Delta$-short control and long control:

\begin{figure*}[t]
\vskip 0.2in
\vspace{-12pt}
\begin{center}
\centerline{\includegraphics[width=\columnwidth]{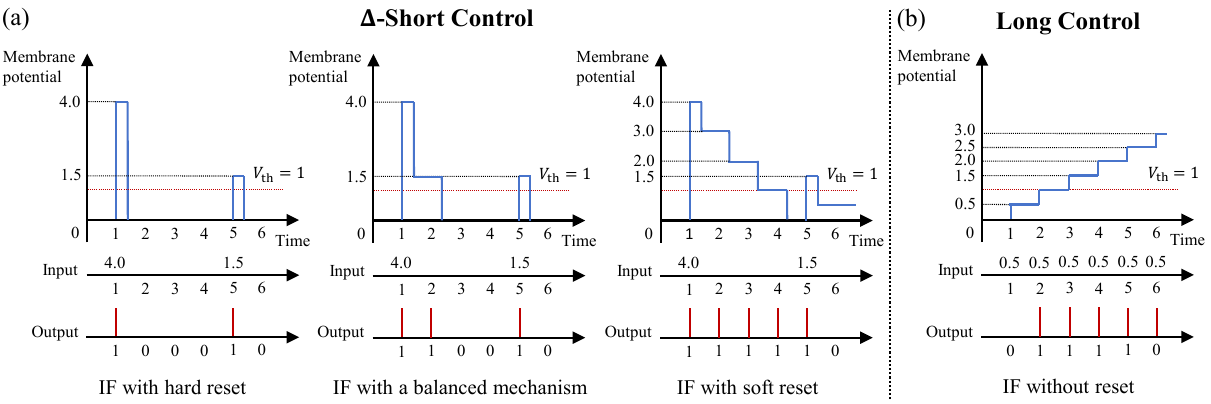}}
\vspace{-5pt}
\caption{The reset mechanism serves the function of regulating the membrane potential in an input-dependent manner, which can be categorized into $\Delta$-short and long control. (a) 
$\Delta$-short control. \emph{Left:} Hard reset enforces short control at $\Delta=1$ level, but it does not allow spatial discriminability between inputs of varying importance. Regardless of how large the membrane potential is at the current timestep, it will be forcibly reset to zero if it exceeds the threshold. \emph{Right:} Soft reset extends the control duration as the input magnitude increases, which can lead to continuous spike firing and reduced temporal discriminability. That is, compared to hard reset, soft reset allows the input at the current timestep to influence several subsequent timesteps; but it also introduces the challenge of making it difficult to distinguish which inputs across different timesteps are more important. \emph{Middle:} We therefore seek a balanced mechanism that adaptively determines the duration of membrane potential influence based on the input. (b) Long control. Without reset, even under a relatively small constant input sequence, e.g., \{0.5\}, the membrane potential of a spiking neuron would continuously accumulate, leading to infinite spike firing.}\label{fig: membrane_control}
\end{center}
\vskip -0.2in
\end{figure*}

\begin{definition}
There exists an $\Delta \in \mathbb N^+$ such that, for any $t>\Delta$, if $H_{t-\Delta}\geq V_{\text{th}}$  and $X_{t-\Delta+1}$, ..., $X_t < V_{\text{th}}/\Delta$, it always holds that $H_t<V_{\text{th}}$. In this case, the spiking neuron is said to have \textbf{$\Delta$-short control} over the membrane potential. \label{local_control}
\end{definition}
\noindent \textit{Remark}: The reset mechanism controls how long a large membrane potential affects the spiking neuron. For example, consider an IF neuron without a decay factor, with $V_{\text{th}}=1$ and $H_1=X_1=4$. In hard reset, the membrane potential is immediately set to 0 after a spike firing, so the effect of the large input lasts for $\Delta=1$ timestep. In contrast, with a soft reset, the spike persists for 4 timesteps. $\Delta$-short control ensures that a very large input affects the spiking neuron only within a relatively short time window $\Delta$, thereby preventing prolonged spike firing (see Fig. \ref{fig: membrane_control}).
\begin{definition}
If the input sequence $\{X_t\}$ has an upper bound $C$, then the membrane potential sequence $\{H_t\}$ also has an upper bound $C_H$. In this case, the spiking neuron is said to have \textbf{long control} over the membrane potential. \label{global_control}
\end{definition}
\vspace{-8pt}
\noindent \textit{Remark}: Long control prevents sustained spike firing or even membrane potential explosion caused by small inputs that accumulate without being reset (see Fig. \ref{fig: membrane_control}). In other words, long control keeps the membrane potential stable over an arbitrarily long period of time.


\textbf{Towards Better Functional Realization.} Although we have identified two functions of reset, the reset mechanism itself is not the optimal realization of these functions. Specifically, the nonlinear response is binary—either 0 or 1—lacking diversity. More importantly, the control of membrane potential via hard or soft reset is highly rigid and lacks flexibility. In hard reset, regardless of how large the membrane potential becomes, its influence is terminated after a single timestep. This strict mechanism severely limits spatial discriminability, as spike generation cannot reflect differences in input magnitude beyond threshold. Conversely, soft reset subtracts a fixed amount from the membrane potential upon spiking. For large inputs, multiple timesteps are required to neutralize their accumulated effect, leading to prolonged spike firing. As a result, spike events become increasingly dependent on past inputs, degrading temporal discriminability and obscuring the contribution of the current input.

Therefore, in many existing parallel spiking neuron designs, structures that attempt to approximate either hard reset \cite{shen2025spikingssms, feng2025fpt} or soft reset \cite{li2024parallel, huang2024prf, chen2024pmsn} can at best reproduce functions similar to those of the reset mechanism, but cannot achieve functions beyond it. Recognizing the inherent limitations of reset helps us focus on enhancing the two functions abstracted from it.

\subsection{Conditions for supporting parallel training and serial inference} \label{sec: condition}
A natural idea to realize parallel training in SNNs is to replace the reset mechanism with other parallelizable technique. However, some previous works \cite{fang2023parallel, li2024parallel, su2024snn} sacrifice the inherent efficiency of serial inference in vanilla spiking neurons, where the membrane potential at each timestep can be computed solely from the preceding membrane potential (or a small fixed set of states) and the current input. Consequently, such approaches introduce increased computational and memory overhead during inference and may even fail to generalize beyond the training sequence length.

We therefore argue that parallel training in SNNs should remain compatible with serial inference, allowing appropriate computational modes at different stages. Through observation and induction, we further identify three structural conditions that spiking neurons must satisfy to enable parallel training while retaining efficient serial inference.

\begin{figure*}[t]
\vskip 0.2in
\vspace{-12pt}
\begin{center}
\centerline{\includegraphics[width=\columnwidth]{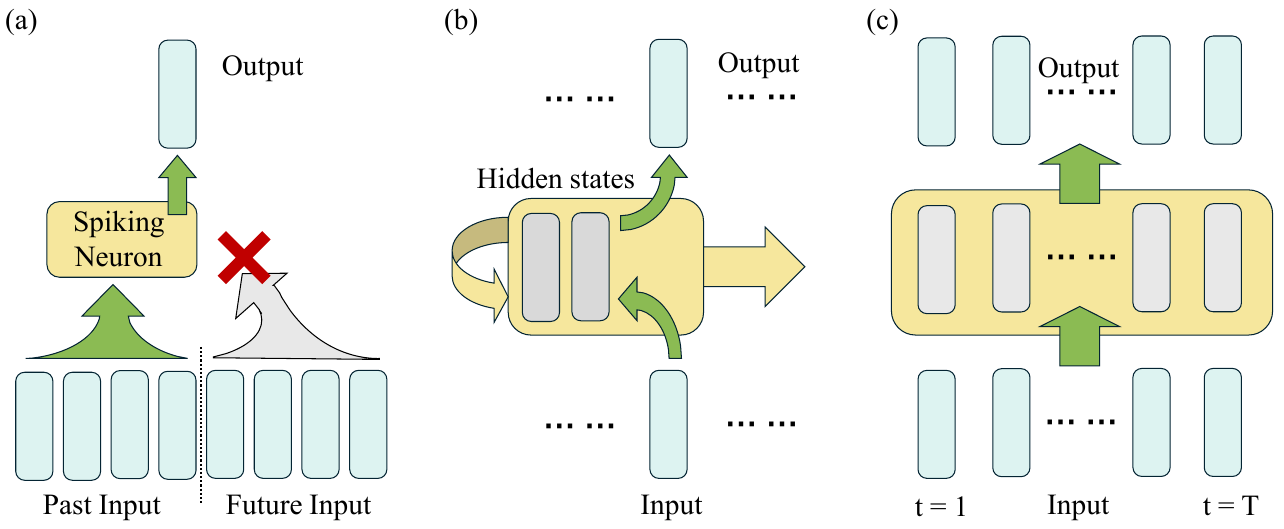}}
\vspace{-5pt}
\caption{Illustration of the three conditions for spiking neurons to achieve parallel training and serial inference. (a) Condition 1: Prefix summarizability. The output can only be determined by inputs from the past (the prefix), and this dependency does not change over time. (b) Condition 2: Online updatability. The internal hidden states can be recurrently updated as the sequential input arrives. (c) Condition 3: Offline parallelizability. For a fixed-length sequence, the output can be computed via parallel computation through time.}\label{fig: condition}
\end{center}
\vskip -0.2in
\end{figure*}


\textbf{Condition 1: Prefix Summarizability.} At any time step $t$, the output $S_t$ of the spiking neuron depends only on a representation $H_t$ determined by the prefix $X_{1:t}$ ($X_1,X_2,...,X_t$).
\begin{align}
    \forall t\ge 1, \;H_t=\phi_{\theta}(X_{1:t}), \;S_t=g_{\theta'}(H_t)
\end{align}

\noindent \textit{Remark}: This condition implicitly requires a specific structural relationship between the intermediate representation $H$ and the input $X$. First, $H_t$ must be causal, depending only on the prefix $X_{1:t}$ and not on any future inputs. Second, the prefix summary $\phi$ must be time-invariant, with parameters $\theta$ shared across all time steps.

\textbf{Condition 2: Online Updatability.} The prefix representation $H_t$ can be updated online through a finite amount of computation during inference.
\begin{align}
    \forall t\ge 1, \; H_t=u_{\theta}(H_{t-1},X_t) \label{eq:online_updatability}
\end{align}

\noindent \textit{Remark}: This condition indicates that the membrane potential update process must reuse computation results from previous time steps.

\textbf{Condition 3: Offline Parallelizability.} All $s_t$ within a window of size $T$ can be obtained by parallel computation through time during training.
\begin{align}
    \exists p, \;\forall t\in [1, T], \;  \text{s.t. }H_t=p_{\theta}(X_{1:t})
\end{align}

\noindent \textit{Remark}: This condition suggests that, given the offline sequence $x_{1:T}$, there exists a computational graph (e.g., convolution, matrix multiplication, or parallel scan) for $H_{1:T}$ along the temporal dimension that is independent of the recursive execution order in Eq. \ref{eq:online_updatability}.

\begin{table*}[t]
\vspace{-4pt}
\caption{A review of vanilla spiking neurons and its typical parallel variants. Function 1: Introducing nonlinearity. Function 2: Controlling the membrane potential. PTSI: Parallel training and serial inference. Condition 1: Prefix summarizability. Condition 2: Online updatability. Condition 3: Offline parallelizability.}
\label{tab: conditions}
\vspace{4pt}
\centering
\renewcommand{\arraystretch}{1.15}
\resizebox{\linewidth}{!}{
\setlength{\tabcolsep}{3pt}
\begin{tabular}{c|cc|ccc}
\toprule
\multirow{2}{*}[-0.5ex]{Methods} & \multicolumn{2}{c|}{Functions of Reset} & \multicolumn{3}{c}{Conditions of PTSI} \\ \cmidrule{2-6}
    &   Function 1 & Function 2  & Condition 1   &Condition 2 &    Condition 3     \cr\midrule
LIF \citep{abbott1999lapicque} & \cmark & \cmark & \cmark & \cmark & \xmark \cr
PSN \citep{fang2023parallel} & \xmark & \xmark & \xmark & \xmark & \cmark \cr
Masked PSN \citep{fang2023parallel} & \xmark & \xmark & \xmark & \xmark & \cmark \cr
Sliding PSN \citep{fang2023parallel} & \xmark & \cmark & \cmark & \cmark & \cmark \cr
PRF \citep{huang2024prf} & \xmark & \cmark & \cmark & \cmark & \cmark \cr
IPSU \citep{li2024parallel} & \cmark & \xmark & \xmark & \xmark & \cmark \cr
BPSN \citep{su2024snn} & \xmark & \xmark & \xmark & \xmark & \cmark \cr \midrule
\textbf{DSN (Ours)} & \cmark & \cmark & \cmark & \cmark & \cmark \cr
\bottomrule
\end{tabular}
}
\end{table*}

As the conclusion of this section, Table \ref{tab: conditions} summarizes the extent to which various spiking neurons satisfy the functions of reset and the conditions required for enabling parallel training while remaining compatible with serial inference. It is worth noting that the early and well-known PSN and Masked PSN \citep{fang2023parallel} have learnable parameters coupled with the training timestep, and thus fail to satisfy Conditions 1 and 2. More critically, PSN cannot guarantee causality in sequential computations, and its number of parameters grows quadratically with the sequence length, thereby incurring substantial computational overhead. Therefore, they are not ideal candidates for parallelizable spiking neurons. Instead, the parallelizable spiking neuron we seek should not only preserve—or even enhance—the two functions of reset, but also satisfy the three conditions required for parallel training and serial inference.

\section{Methods: Dynamic Decay Spiking Neuron}\label{sec: methods}

\begin{figure*}[t]
\vskip 0.2in
\vspace{-12pt}
\begin{center}
\centerline{\includegraphics[width=\columnwidth]{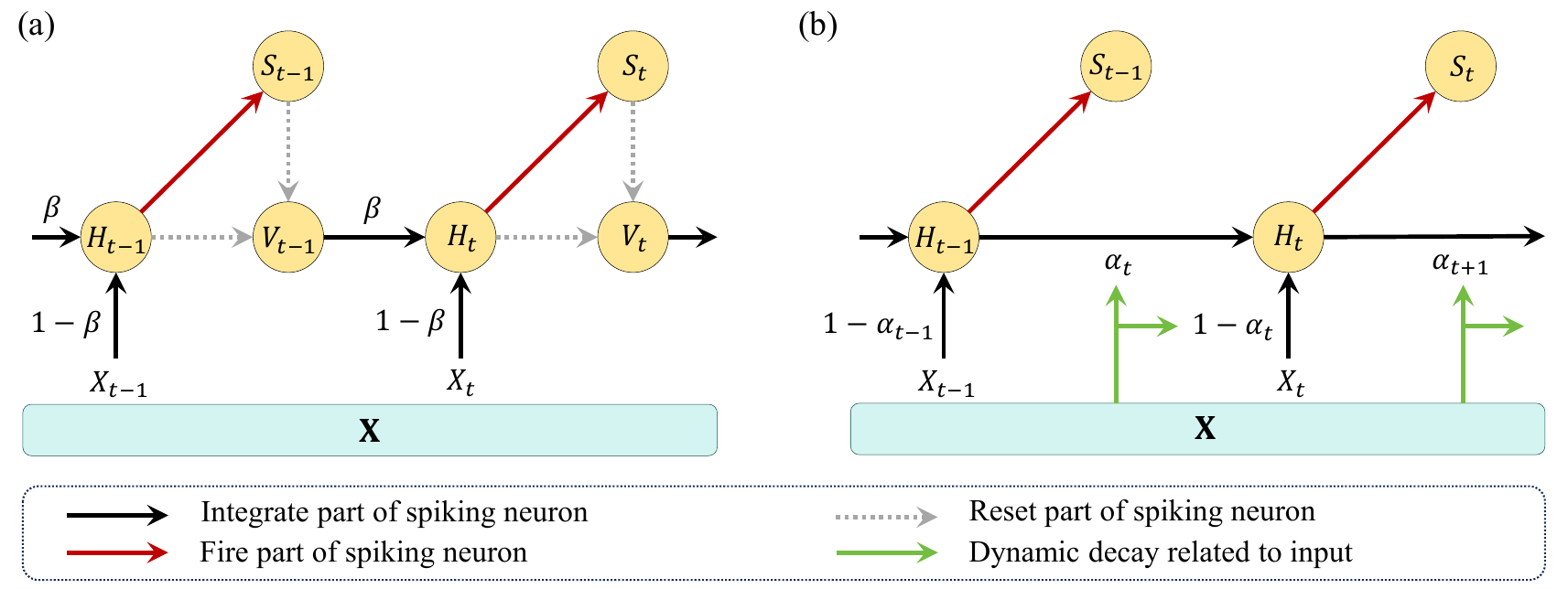}}
\vspace{-5pt}
\caption{Illustration of the computational process of LIF spiking neuron and reset-free spiking neuron with dynamic decay. (a) LIF neuron. The current input and the membrane potential from last timestep are integrated with a constant decay factor $\beta$. The integrated $H_t$ determines the firing of the spike $S_t$, which in turn decides whether $H_t$ is reset. (b) Dynamic decay. After replacing the reset mechanism with dynamic decay $\alpha_t$, the membrane potential can be computed both serially and in parallel.}
\end{center}
\vskip -0.2in
\end{figure*}

Sec. \ref{sec: functional view of parallelizing spiking neurons} provides general guidance for designing parallelizable spiking neurons through the lens of functional analysis. Building on this, we propose a Dynamic Decay Spiking Neuron \textbf{(DSN)}, which includes two modifications to vanilla spiking neurons: i) the reset mechanism is removed and the constant decay $\beta$  is replaced with a dynamic decay $\alpha_t$. Here, $\alpha_t$ is obtained via a causal convolution over the input. ii) the spike firing pattern is optimized by incorporating emerging integer-valued training techniques \cite{luo2025integer}. DSN has the following vectorized serial form:
\begin{align}
    \mathbf{H}_t&=\bm{\alpha}_t \odot\mathbf{H}_{t-1}+(1-\bm{\alpha}_t)\odot\mathbf{X}_t.\label{eq:dsn_charge}\\
    \mathbf{S}_t&=\mathrm{Clip}[\mathrm{Round}(\mathbf{H}_t),0,N].\label{eq:dsn_fire}
\end{align}
Here, the input $\mathbf{X}_t \in \mathbb{R}^{C\times1}$ has $C$ channels. $\odot$ denotes element-wise product. $\mathrm{Round(\cdot)}$ indicates rounding to the nearest integer. $\mathrm{Clip}[x,0,N]$ means clipping the input $x$ to the range [0, $N$]. $N$ is a positive integer, representing the upper limit of the number of spikes to be emitted. 

In Eq. \ref{eq:dsn_charge}, we derive the dynamic decay $\bm{\alpha}_t$ from $\mathbf{X}_{t-k+1:t}$ ($k$ inputs from $\mathbf{X}_{t-k+1}$ to $\mathbf{X}_{t}$) as follows:
\begin{align}
    &\bm{\alpha}_t'=\mathrm{CausalConv1D}(\mathbf{X}_{t-k+1:t}),\label{eq:dsn_causalconv}\\
    &\bm{\alpha}_t=\mathrm{Sigmoid}(\bm{\alpha}_t')^{1/\tau}\label{eq:dsn_ffn}
\end{align}
Here, $\mathrm{CausalConv1D(\cdot)}$ is a causal 1D convolution to mix the features from the past $k$ inputs. $\mathrm{Sigmoid}$ function is chosen to constrain $\bm{\alpha}_t$ between 0 and 1. $\tau$ is a hyperparameter to fine-tune $\bm{\alpha}_t$. 

\textbf{Design Rationale.} After removing the reset mechanism, we find that a varying decay factor can also introduce nonlinearity and control membrane potential, thereby restoring the functions of reset. This forms the basis of our initial design. The causal convolution is usually short but has been shown to be effective in capturing short-term dependency \cite{gu2024mamba, de2024griffin}. The optimized spike firing pattern helps reduce training overhead and learn better representations \cite{yao2025scaling}. Moreover, we can choose to introduce an extra learnable parameter $\mathbf{W} \in \mathbb{R}^{C\times C}$ to mix the features across different channels of $\bm{\alpha}_t'$ before applying the $\mathrm{Sigmoid}$ function in Eq. \ref{eq:dsn_ffn}, i.e. $\mathbf{W}\bm{\alpha}_t'$. This enhanced DSN is suitable as a complete block to further improve the modeling ability of SNNs.

\textbf{Functions Superior to Reset.} DSN is a specific implementation of dynamic decay $\alpha_t$, which can be related to input at preceding timesteps and is usually limited to between 0 and 1 using a non-linear activation function with learnable parameters $\theta$:
\begin{align}
    &\alpha_t=\phi_{\theta}(X_t, X_{t-1}, ...)\in[0, 1]\label{eq: dynamic_decay_general_form}
\end{align}
In fact, we can prove that dynamic decay in Eq. \ref{eq: dynamic_decay_general_form} can implement all the functions of reset.

\begin{proposition}
Dynamic decay can introduce nonlinearity and enabling more flexible $\Delta$-short and long control of the membrane potential than the reset mechanism.\label{equivalent_function_reset_dynamic_decay}
\end{proposition}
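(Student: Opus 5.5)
We prove the three parts of Proposition \ref{equivalent_function_reset_dynamic_decay} separately. Throughout, $H_t=\alpha_t H_{t-1}+(1-\alpha_t)X_t$ with $\alpha_t=\phi_\theta(X_t,X_{t-1},\dots)\in[0,1]$ as in Eq. \ref{eq: dynamic_decay_general_form}. Each part is an existence statement or a simple bound, so the plan is to give explicit choices of $\phi_\theta$ (or to use only $\alpha_t\in[0,1]$) rather than a general theory.

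\textbf{Nonlinearity.} Unrolling the recursion gives
\begin{align}
H_t=\sum_{i=1}^{t}\Big(\prod_{j=i+1}^{t}\alpha_j\Big)(1-\alpha_i)X_i .
\end{align}
Each coefficient now depends on the inputs themselves. By Definition \ref{non-linearity}, it suffices to show that this map is not linear for some admissible $\phi_\theta$. Take $t=2$ and $\alpha_t=\sigma(wX_t)$ with $w\neq 0$. Then $H_2=\sigma(wX_2)(1-\sigma(wX_1))X_1+(1-\sigma(wX_2))X_2$. Checking additivity $g(X+Y)=g(X)+g(Y)$, or homogeneity at a specific point such as $X_1=1$, $X_2=0$ scaled by $2$, exhibits a violation. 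The only exception is the degenerate constant-$\alpha$ case ($w=0$), which recovers the linear LIF without reset. So dynamic decay introduces nonlinearity, and the response is continuous-valued, not binary as with reset.

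\textbf{Long control.} Induct on $t$ to show that $H_t$ is a convex combination of $H_{t-1}$ and $X_t$. With $H_0=0$ and $X_t\le C$, if $H_{t-1}\le\max(C,0)$ then $H_t\le\alpha_t\max(C,0)+(1-\alpha_t)C\le \max(C,0)=:C_H$. This holds for every $\phi_\theta$, so Definition \ref{global_control} is satisfied uniformly. Compare the reset-free LIF with $\beta=1$ (IF), where a constant input $0.5$ accumulates without bound. (If lower bounds are also desired, the same argument gives $|H_t|\le\sup|X_t|$.)

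\textbf{$\Delta$-short control and flexibility.} Suppose $H_{t-\Delta}\ge V_{\text{th}}$ and $X_s<V_{\text{th}}/\Delta$ for $s\in(t-\Delta,t]$. Unrolling gives
\begin{align}
H_t=\Big(\prod_{s=t-\Delta+1}^{t}\alpha_s\Big)H_{t-\Delta}+\sum_{s}c_sX_s,
\end{align}
where the $c_s\ge 0$ sum to $1-\prod_s\alpha_s$. The second term is therefore below $(1-\prod\alpha_s)V_{\text{th}}/\Delta<V_{\text{th}}$ in a suitable sense.

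The main obstacle is the first term. Since $H_{t-\Delta}$ is unbounded a priori, we cannot guarantee $H_t<V_{\text{th}}$ for arbitrary $\phi_\theta$. The statement is existential and comparative ("can", "more flexible"), so the plan is to construct $\phi_\theta$ that make $\prod\alpha_s$ small when triggered. For $\Delta=1$, choose $\alpha_t=0$ whenever $X_t<V_{\text{th}}$; then $H_t=X_t<V_{\text{th}}$, reproducing hard reset. For general $\Delta$, let $\alpha$ depend on the window $X_{t-\Delta+1:t}$, which is within reach of a causal convolution of width $k\ge\Delta$. Set $\alpha_s=1$ while the recent inputs are large, and drive $\alpha$ to $0$ at the step where the window of small inputs reaches length $\Delta$. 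Then $H_t$ equals a convex combination of the small inputs, which is $<V_{\text{th}}/\Delta\le V_{\text{th}}$.

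Because $\Delta$ (and more generally the duration of influence, via graded $\alpha_t$ depending on input magnitude) is chosen by $\phi_\theta$ rather than fixed at $1$ (hard reset) or at $\lceil H/V_{\text{th}}\rceil$ (soft reset), this covers both reset behaviours as special cases and interpolates between them. That establishes the "more flexible" claim.
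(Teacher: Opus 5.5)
Your argument is correct and uses the same three-part decomposition as the paper, but the $\Delta$-short control step is built differently.

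\textbf{How the two constructions differ.} The paper puts the whole burden on the \emph{first} decay of the window. Any $\alpha_{t-\Delta+1}<(V_{\text{th}}-X_{t-\Delta+1})/(H_{t-\Delta}-X_{t-\Delta+1})\in(0,1]$ already gives $H_{t-\Delta+1}<V_{\text{th}}$. Each later update is a convex combination with an input below $V_{\text{th}}/\Delta\le V_{\text{th}}$, so the sub-threshold region is invariant and the remaining decays are unconstrained. You instead unroll the whole window, isolate $\prod_s\alpha_s$ as the only obstruction, and annihilate it with $\alpha_t=0$ once $\Delta$ consecutive small inputs have arrived.

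\textbf{What each buys.} The paper's route is quantitative and graded. It admits strictly positive decays, and its remark reuses the same inequality to realize any effective duration $\tau\in[1,\Delta]$, which gives \emph{more flexible} its concrete content. The price is that its threshold depends on $H_{t-\Delta}$, hence on the entire input history. Your rule depends only on $X_{t-\Delta+1:t}$ and works uniformly in $t$; it does not even need $H_{t-\Delta}\ge V_{\text{th}}$. However, it uses the extreme value $\alpha=0$, so flexibility is argued only qualitatively.

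Your explicit homogeneity counterexample for nonlinearity and your bound $\max(C,0)$ for long control genuinely tighten the paper. The paper only remarks that the coefficients are input-dependent, and its base case $H_1=(1-\alpha_1)X_1\le C$ tacitly needs $C\ge 0$.

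\textbf{Side claims to soften.} A causal convolution followed by a sigmoid never attains $\alpha=0$. For $\Delta\ge 2$ it also cannot test whether all of $X_{t-\Delta+1:t}$ lie below $V_{\text{th}}/\Delta$, since that region is an orthant, not a half-space. So your construction, like the paper's, holds for the general form $\alpha_t=\phi_\theta(X_t,X_{t-1},\dots)\in[0,1]$, not for DSN's specific parameterization. Setting $\alpha_s=1$ before the window closes is unnecessary, and it would make the neuron ignore large inputs. Finally, in the paper's normalization $H_t=\beta V_{t-1}+(1-\beta)X_t$, taking $\beta=1$ discards the input rather than giving IF. The relevant contrast for long control is soft-reset IF, which the appendix shows is unbounded.
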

\noindent \textit{Remark}: We provide the detailed proof in \ref{detail_dynamic_decay_proof}. An intuitive interpretation is that the variability of $\alpha_t$ broadens the expressive range of nonlinearity and allows adaptive control of the membrane potential. Additionally, the proposition holds for both binary and integer-valued spike firing, showing that dynamic decay is a general and powerful alternative to reset. 


\textbf{From Serial Inference to Parallel Training.} After removing the reset mechanism, DSN retains the iterative nature of vanilla spiking neurons while supporting a parallelizable computation graph. This satisfies the three conditions described in Sec. \ref{sec: condition}, enabling parallel training. Specifically, Eq. \ref{eq:dsn_charge} can be rewritten into a general form determined solely by $\mathbf{X}_{1}, \mathbf{X}_{2}, ..., \mathbf{X}_{t}$:
\begin{align}
    &\mathbf{H}_t=\sum_{i=1}^{t}\left(\prod_{j=i+1}^t{\bm{\alpha}_j}\right)(\bm{1}-\bm{\alpha}_i)\odot \mathbf{X}_i.
\end{align}
Stacking $\mathbf{H}_{1}, \mathbf{H}_{2}, ..., \mathbf{H}_{T}$ gives $\mathbf{H} \in \mathbb{R}^{C\times T}$, and similarly for $\mathbf{X} \in \mathbb{R}^{C\times T}$. This allows expressing the iterative computation in a matrix form $\mathbf{H}=\mathbf{XW}$, where
\begin{align}
\mathbf{W}_{ij} =
    \begin{cases}
    \left( \prod_{k=i+1}^{j} \bm{\alpha}_k \right) (\bm{1}-\bm{\alpha}_i), & j \ge i\ \\
    0, & j < i
    \end{cases}.
\end{align}

\noindent Define $\mathbf{P},\mathbf{A}\in\mathbb{R}^{C\times T}$, $\mathbf{M}\in\mathbb{R}^{T\times T}$, where
\begin{align}
\mathbf{P}_j=\prod_{k=1}^j{\bm{\alpha}_k}, \mathbf{A}_i=\bm{\alpha}_i, \mathbf{M}_{ij}=\begin{cases}
    1, &j\geq i\\
    0, &j<i
    \end{cases}.
\end{align}
Then, the parallel form can be written as:
\begin{align}
    &\mathbf{H}=\mathbf{X}\left(\left(\left(\frac{\bm{1}-\mathbf{A}}{\mathbf{P}}\right)^T\mathbf{P}\right)\odot \mathbf{M}\right)
\end{align}
Here, $\frac{\mathbf{1}-\mathbf{A}}{\mathbf{P}}$ and $\odot$ denote element-wise division and product, respectively. During training, the dynamic decay $\mathbf{A}$, membrane potential $\mathbf{H}$ and their gradients can be computed rapidly in parallel\footnote{In practice, we avoid computing $\mathbf{H}$ via matrix multiplication due to numerical instability of $\mathbf{P}$ as the denominator \cite{yang2024gated}. Instead, we use a two-stage parallel scan algorithm \cite{martin2018parallelizing} for Eq. \ref{eq:dsn_charge} to derive $\mathbf{H}$.} with Triton-based acceleration operators \cite{tillet2019triton,yang2024fla}. During inference, we switch to Eq. \ref{eq:dsn_charge} for efficient serial inference, which requires to store only minimal states from the causal convolution and recurrent structure, thereby reducing both computational and memory overhead.

\section{Experiments}\label{experiments}
In this section, we evaluate the proposed DSN through a series of experiments in terms of 

\begin{itemize}
\item \textbf{Training Efficiency and Extrapolation.} We demonstrate that DSN achieves significant speedups over existing parallelizable spiking neurons on long sequences, and show through a text extrapolation experiment that serial inference is an inherent and indispensable property of spiking neurons.
\item \textbf{Generality.} The neuronal generality includes: i) The effectiveness of spiking neuron design, including the causal convolution structure and two spike activation modes (binary and integer); ii) Flexible adaptation to various network architectures, such as convolutional neural networks and Transformers; iii) Competitive performance across five different tasks. 
\item \textbf{Energy Consumption.} We discuss the potential of DSN for deployment on neuromorphic hardware and conduct a preliminary analysis of its energy efficiency compared with prior parallel spiking neurons.
\end{itemize}


\subsection{Training Efficiency and Extrapolation}

\begin{figure*}[t]
\vskip 0.2in
\begin{center}
\includegraphics[width=0.49\columnwidth]{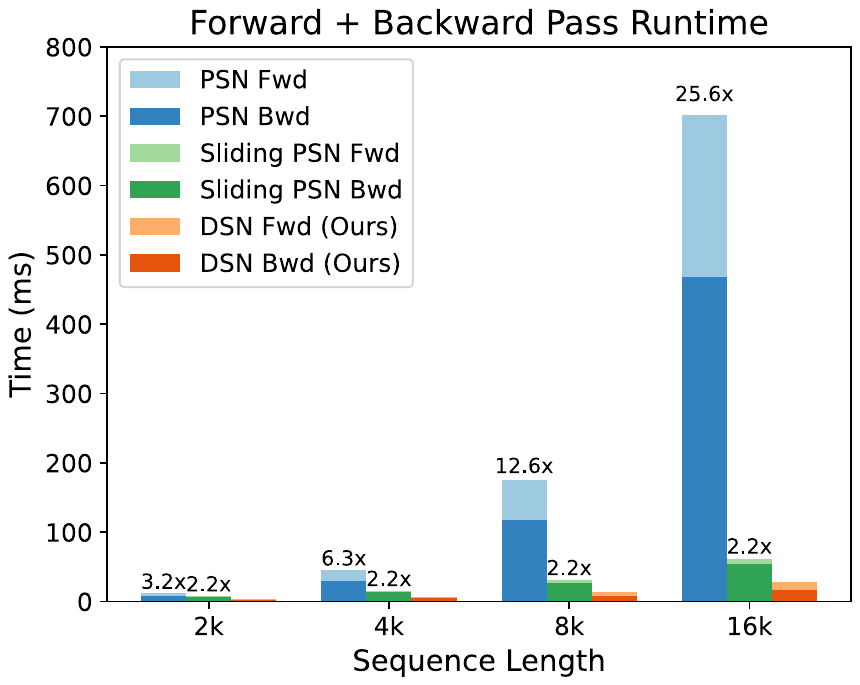}
\hfill
\includegraphics[width=0.48\columnwidth]{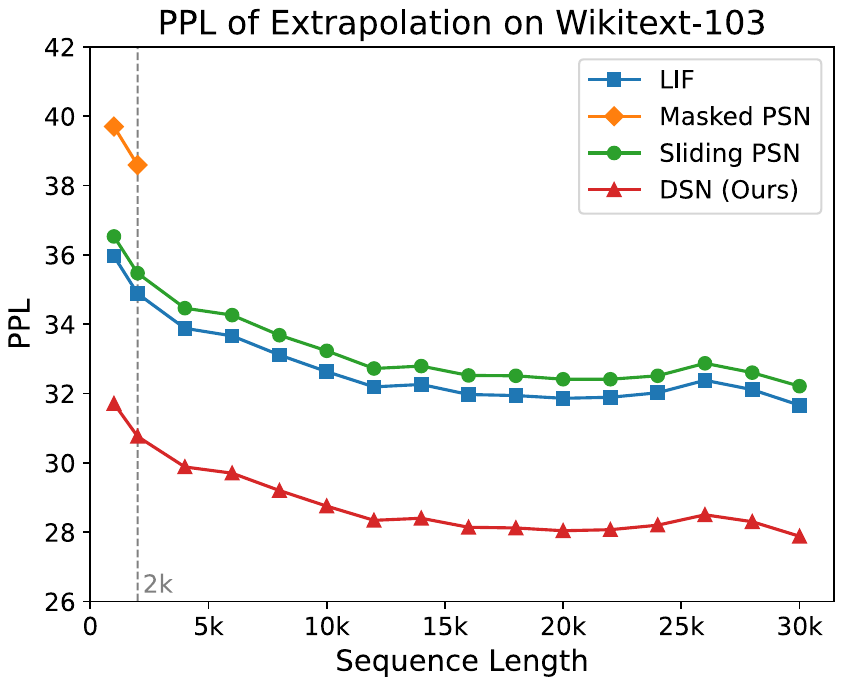}
\vspace{-5pt}
\caption{Results of training efficiency (left) and extrapolation (right). Left: Runtime of the forward and backward pass when training on sequences with lengths from 1k to 8k. Right: Perplexity (PPL, the lower the better) of extrapolation on Wikitext-103 with sequence lengths ranging from 1k to 30k, trained on sequences of length 2k.}\label{fig: speedup}
\end{center}
\vskip -0.2in
\end{figure*}

\textbf{Parallel Training Improves Efficiency.} We evaluate the training efficiency of different parallelizable spiking neurons by measuring the average runtime of 100 forward and backward passes under varying sequence lengths. From the results in Fig. \ref{fig: speedup} left, it can be observed that the speedup of DSN over PSN \citep{fang2023parallel} becomes increasingly pronounced as the sequence length grows. For sequences of length 16k, DSN achieves a 21.7× speedup in the forward pass and a 28.0× speedup in the backward pass compared to PSN, resulting in an overall runtime speedup of 25.6×. For Sliding PSN \citep{fang2023parallel}, a parameter-efficient variant of PSN, on sequences of length 16k, although DSN is slightly slower in the forward pass, its 3.2× faster backward pass leads to an overall training speed that is still 2.2× faster than Sliding PSN.

\textbf{Serial Inference Ensures Extrapolation.} In sequence tasks such as language modeling, models are often required to extrapolate to sequences longer than those seen during training. While vanilla spiking neurons can operate on sequences of arbitrary length, some parallelized spiking neurons lose serial inference ability and are therefore limited in extrapolation. To evaluate the extrapolation ability of DSN, we conduct experiments on WikiText-103 \citep{merity2017pointer} dataset using SpikingSSM \citep{shen2025spikingssms} architecture with different spiking neurons. Models are trained on sequences of length 2k and evaluated on longer sequences by measuring perplexity. As shown in Fig. \ref{fig: speedup} right, masked PSN \citep{fang2023parallel} fails beyond 2k tokens, whereas DSN maintains stable performance on sequences up to 30k tokens and outperforms LIF and sliding PSN \citep{fang2023parallel}.

\subsection{Generality}

\subsubsection{Effectiveness of Spiking Neuron Design}
We evaluate the effectiveness of spiking neuron design on Sequential CIFAR10, including the causal convolution structure and different spike activation modes. Results in Table \ref{tab:ablation_dynamic_decay} and \ref{tab:ablation_spike_firing} show that: 

\begin{wraptable}{r}{0.45\textwidth}
\vspace{-10pt}
\small
\renewcommand{\arraystretch}{1.1}
\setlength{\tabcolsep}{3pt}

\centering
\caption{Ablations: different design of dynamic decay structure.}
\label{tab:ablation_dynamic_decay}
\vspace{6pt}
\begin{tabular}{cc}
\toprule
Methods & Accuracy (\%) \\ \midrule
\multicolumn{1}{l}{causal conv} & 90.10 \\
\multicolumn{1}{l}{fully connected layer} & 89.28 \\
\multicolumn{1}{l}{low-rank mapping} & 86.72 \\
\multicolumn{1}{l}{inter-channel conv} & 86.76 \\
\multicolumn{1}{l}{w/o conv} & 84.53 \\
\bottomrule
\end{tabular}

\vspace{4pt}
\setlength{\tabcolsep}{4pt}

\caption{Ablations: spike activation modes. $N$: the maximum spike count. Rect: Rectangular. ATan: Arc Tangent.}
\label{tab:ablation_spike_firing}
\vspace{6pt}
\begin{tabular}{ccc}
\toprule
$N$ & Surrogate Grad & Accuracy (\%) \\ \midrule
4 & Rect & 90.10 \\
3 & Rect & 89.56 \\
2 & Rect & 89.28 \\ \hline
1 & Rect & 86.74 \\
1 & ATan & 87.45 \\
\bottomrule
\end{tabular}

\vspace{-10pt}
\end{wraptable}

First, dynamic decay is the primary contributor to performance. Removing the causal convolution eliminates meaningful neuronal dynamics and leads to a significant degradation (84.53\%). We also test alternative decay structures, including fully connected layers for inter-channel interaction, low-rank mappings, and inter-channel convolution. In comparison, causal convolution remains a simple yet effective design.

Second, dynamic decay is effective for both binary ($N=1$) and integer-valued ($N>1$) spike firing. In the binary mode, using Arc Tangent as the surrogate gradient function yields better performance. In the integer-valued mode, the performance improves with increasing spike limit $N$.

Based on the above experiments, we design DSN by combining causal convolution with an integer-valued training technique, and validate its generality across different model architectures and tasks.

\subsubsection{Generality Across Multiple Tasks}

We evaluate DSN across five tasks: image classification, neuromorphic event processing, time-series forecasting, reinforcement learning, and language modeling, using models that span convolutional neural networks, Transformers, and State Space Models (SSMs). Competitive results across multiple datasets and network architectures demonstrate the general effectiveness of DSN.

\begin{table}[t]
\small
\caption{Performance on Sequential CIFAR. The timestep is 32. Param: parameters (M).}
\label{tab: Sequential CIFAR results}
\vspace{4pt}
\centering
\renewcommand{\arraystretch}{1.1}
\setlength{\tabcolsep}{3pt}
\begin{tabular}{ccccccc}
\toprule
\multirow{2}{*}[-0.5ex]{Methods} & \multirow{2}{*}[-0.5ex]{Parallel} & \multirow{2}{*}[-0.5ex]{Serial}&  \multicolumn{2}{c}{S-CIFAR10} & \multicolumn{2}{c}{S-CIFAR100} \\ \cmidrule{4-7}
 & & & Param. & Accuracy (\%)& Param. & Accuracy (\%) \\ \midrule
 LIF \citep{abbott1999lapicque} & \xmark& \cmark & $0.513$ & $81.50$ & $0.537$ & $55.45$ \\
 PLIF \citep{fang2021incorporating} & \xmark& \cmark & $0.513$ & $81.50$ & $0.537$ & $55.45$ \\
 PSN \citep{fang2023parallel}  & \cmark& \xmark & $0.521$ &$88.45$ & $0.544$& $62.21$   \\
 RPSU \citep{li2024parallel}  & \cmark& \xmark & $0.517$ &$84.44$ & $0.540$ &$57.89$ \\
 IPSU \citep{li2024parallel}  & \cmark& \xmark & $0.517$ &$87.28$ & $0.540$ &$59.76$ \\
 Sliding PSN \citep{fang2023parallel} & \cmark& \cmark & $0.514$ &$86.70$ & $0.538$ &$62.11$ \\ \midrule
 \bf{DSN (Ours)}  & \cmark& \cmark & $0.519$ &$\bf90.10$ &  $0.542$ &$\bf64.70$   \\ 
 \bottomrule
\end{tabular}
\end{table}

\begin{table}[!t]
\small
\vspace{-4pt}
\caption{Performance on ImageNet and CIFAR10-DVS. T: Timesteps. }
\label{tab: imagenet and dvs results}
\vspace{4pt}
\centering
\renewcommand{\arraystretch}{1.1}
\setlength{\tabcolsep}{3pt}
\resizebox{\linewidth}{!}{
\begin{tabular}{ccccccc}
\toprule
 Dataset &   Methods &  Architecture &  Parallel & Serial & T & Accuracy (\%)\\ \midrule
 \multirow{6}{*}[0.0ex]{ImageNet} 
 & MBPN \citep{guo2023mbpn}  & ResNet18  & \xmark& \cmark &$4$ & $63.14$   \\
 & InfLoR-SNN \citep{guo2022lnflor-snn} & ResNet18 &  \xmark& \cmark &$4$ & $64.78$ \\
 & SEW ResNet \citep{fang2021deep}  & SEW ResNet18 &  \xmark& \cmark &$4$ & $63.18$   \\
 & PMSN \citep{chen2024pmsn}  & SEW ResNet18  & \cmark& \cmark&$4$ & $66.64$   \\
 & PSN \citep{fang2023parallel}  & SEW ResNet18  & \cmark& \xmark&$4$ & $67.63$   \\ \cmidrule{2-7}
 & \bf{DSN (Ours)}  & SEW ResNet18 &  \cmark& \cmark&$4$ & $\bf{68.21}$   \\ \midrule
 \multirow{7}{*}[0.0ex]{CIFAR10-DVS} & SEW ResNet \citep{fang2021deep}  & Wide 7B Net  & \xmark& \cmark&$16$ & $74.40$   \\
 & GLIF \citep{yao2022glif}  & Wide 7B Net  & \xmark& \cmark&$16$ & $78.10$   \\
 & DeepTAGE \citep{liu2025deeptage}  & VGG-11  & \xmark& \cmark&$10$ & $81.23$   \\
 & RPSU \citep{li2024parallel}  & VGGSNN  & \cmark& \xmark&$10$ & $82.00$   \\
 & FPT \citep{feng2025fpt}  & VGG-11  & \cmark& \xmark&$10$ & $\bf{85.50}$   \\
 & sliding PSN \citep{fang2023parallel}  & VGGSNN   & \cmark& \cmark&$4,8$ & $82.30, 85.30$   \\ \cmidrule{2-7}
 & \bf{DSN (Ours)}  & VGGSNN  & \cmark& \cmark &$4,8$ & $83.90, 85.30$   \\ 
\bottomrule
\end{tabular}
}
\end{table}

\begin{table*}[t]
\centering
\renewcommand{\arraystretch}{1.1}
\vspace{-4pt}
\caption{Experimental results of 3 time-series forecasting tasks with prediction lengths $L=6,24,48,96$.  $\uparrow$ ($\downarrow$) indicates the higher (lower) the better. All results are averaged across $3$ random seeds. The leading zero before the decimal point is omitted. Param: parameters (M).}
\vspace{4pt}
\label{tab:time-series tasks}
\setlength{\tabcolsep}{3pt}
\resizebox{\linewidth}{!}{
\begin{tabular}{cccc|cccc|cccc|cccc|c}\toprule
\multirow{2}{*}[-0.5ex]{\bf Methods} & \multirow{2}{*}[-0.5ex]{\bf Spike} & \multirow{2}{*}[-0.5ex]{\bf Param.}& \multirow{2}{*}[-0.5ex]{\bf Metric} & \multicolumn{4}{c|}{\bf Metr-la} & \multicolumn{4}{c|}{\bf Pems-bay} & \multicolumn{4}{c|}{\bf Solar} & \multirow{2}{*}[-0.5ex]{\textbf{Avg.}} \\
\cmidrule{5-16}
& & & & $6$ & $24$ & $48$ & $96$ & $6$ & $24$ & $48$ & $96$& $6$ & $24$ & $48$ & $96$ \\
\midrule
\multirow{2}{*}{Transformer} & \multirow{2}{*}{\xmark}& \multirow{2}{*}{$2.53$}& R$^2$$\uparrow$ & $.727$ & $\bf{.554}$ & $.413$ & $\bf{.284}$ & $.785$ & $.734$ & $.688$ & $.673$ & $.953$ & $.858$ & $.759$ & $.718$ & $.679$ \\
& & & RSE$\downarrow$ & $.551 $ & $\bf{.704} $ & $.808 $ & $\bf{.895}  $ & $.502 $ & $.558 $ & $.610 $ & $.618 $ & $.223 $ & $.377 $ & $.504 $ & $\bf.545$  & $.575$ \\ 
\midrule
\multirow{2}{*}{Spikformer} & \multirow{2}{*}{\cmark}& \multirow{2}{*}{$2.52$}& R$^2$$\uparrow$ & $.713$ & $.527$ & $.399$ & $.267$ & $.773$ & $.697$ & $.686$ & $.667$ & $.929$& $.828$ & $.744$ & $.674$ & $.659$ \\
& & & RSE$\downarrow$ & $.565$ & $.725$ & $.818$ & $.903$ & $.514$ & $.594$ & $.606$ & $.621$ & $.272$ & $.426$ & $.519$ & $.586$  & $.596$ \\ 
\midrule
\multirow{2}{*}{\makecell[c]{Spikformer \\ w/ PSN}} & \multirow{2}{*}{\cmark} & \multirow{2}{*}{$2.68$}& R$^2$$\uparrow$ & $.716$ & $.518$ & $.401$ & $.268$ & $.738$ & $.671$ & $.666$ & $.639$ & $.861$& $.759$ & $.554$ & $.439$  & $.603$ \\
& & & RSE$\downarrow$ & $.562$ & $.731$ & $.815$ & $.901$ & $.553$ & $.620$ & $.624$ & $.649$ & $.383$ & $.504$ & $.685$ & $.749$  & $.648$ \\ 
\belowcolormidrule
\rowcolor{gray!20}
 & & & R$^2$$\uparrow$ & $\bf{.734}$ & $.549$ & $\bf{.422}$ & $.283$ & $\bf{.807}$ & $\bf.745$ & $\bf{.696}$ & $\bf.683$ & $\bf.956$ & $\bf.860$ & $\bf.765$ & $\bf.736$  & $\bf{.686}$ \\
\rowcolor{gray!20}
\multirow{-2}{*}{\makecell[c]{\textbf{Spikformer} \\ \textbf{w/ DSN}}} & \multirow{-2}{*}{\cmark} &\multirow{-2}{*}{$2.68$}& RSE$\downarrow$ & $\bf{.539}$ & $.720$ & $\bf{.804}$ & $.896$ & $\bf{.475}$ & $\bf.538$ & $\bf.581$ & $\bf.594$ & $\bf.219$& $\bf.373$ & $\bf.481$ & $.572$  & $\bf{.566}$ \\ 
\abovecolormidrule
\midrule
\multirow{2}{*}{iTransformer} & \multirow{2}{*}{\xmark}&\multirow{2}{*}{$1.63$} & R$^2$$\uparrow$ & $\bf{.829} $ & $.623 $ & $.439 $ & $\bf.285$ & $\bf{.887}$ & $.719 $ & $.685 $ & $.668 $ & $\bf{.964} $ & $\bf{.879} $ & $\bf{.799} $ & $\bf{.738} $ & $.710$ \\
& & & RSE$\downarrow$ & $\bf{.436} $ & $.648 $ & $.780 $ & $\bf.878 $ & $\bf{.362} $ & $.547 $ & $\bf{.561} $ & $.584$ & $\bf{.191} $ & $.348 $ & $\bf{.448} $ & $.563 $  & $.529$ \\ 
\midrule
\multirow{2}{*}{iSpikformer} & \multirow{2}{*}{\cmark}&\multirow{2}{*}{$1.63$} & R$^2$$\uparrow$ & $.817$ & $.618$ & $\bf.440 $ & $.279$ & $.879$ & $\bf.744 $ & $.687 $ & $\bf.674$ & $.961$ & $.876$ & $.795$ & $\bf{.738}$  & $.709$ \\
& & & RSE$\downarrow$ &$.475 $ &$.668$ &$\bf{.752}$ & $.905$ & $.376$ & $\bf{.536}$ & $.569$ & $\bf{.580}$ & $.204$ & $\bf{.333}$ & $.465$ & $\bf{.521}$  & $.532$ \\ 
\belowcolormidrule
\rowcolor{gray!20}
 & & & R$^2$$\uparrow$ & $.823$ & $\bf{.624}$ & $\bf.440$ & $.283$ & $.883$ & $.740$ & $\bf.689$ & $.672$ & $\bf{.964}$ & $\bf{.879}$ & $.798$ & $.736$  & $\bf{.711}$ \\
\rowcolor{gray!20}
\multirow{-2}{*}{\makecell[c]{\textbf{iSpikformer} \\ \textbf{w/ DSN}}} & \multirow{-2}{*}{\cmark}&\multirow{-2}{*}{$1.79$} & RSE$\downarrow$ & $.450$ & $\bf.646$ & $.755$ & $.881$ & $.368$ & $.541$ & $.564$ & $.583$ & $.199$ & $.350$ & $.450$ & $.526$  & $\bf{.526}$ \\ 
\colorbottomrule
\end{tabular}
}
\end{table*}

\textbf{Sequential CIFAR.} we build a convolution-based SNN on Sequential CIFAR dataset, where the images from CIFAR \citep{krizhevsky2009learning} are input into the model in sequential pixel form and the timestep is equal to the width 32 of the images. The experimental setup and other hyperparameters are kept consistent with those of PSN \citep{fang2023parallel}. Results in Table \ref{tab: Sequential CIFAR results} show that our DSN achieves state-of-the-art performance under the same parameter scale.

\textbf{ImageNet.} We further evaluate the performance of DSN on this larger-scale image classification task \citep{deng2009imagenet}. The experimental settings are identical to \cite{fang2023parallel}. For a fair comparison, we follow \citep{yao2025scaling} and set the training step $T$ of DSN to 1, while treating the upper bound $N=4$ of integer-spike emission as the extended timestep $T$. As illustrated in Table \ref{tab: imagenet and dvs results}, our method still achieves relatively higher accuracy among parallel spiking neurons.

\textbf{CIFAR10-DVS.} To validate the effectiveness of our method in processing neuromorphic events, we select CIFAR10-DVS \citep{li2017cifar10dvs} as the benchmark and adopt the VGG architecture from \cite{deng2022vgg}. As shown in Table \ref{tab: imagenet and dvs results}, DSN shows performance comparable to sliding PSN and ranks just below FPT \citep{feng2025fpt}, which employs a PSN-like structure and therefore cannot perform serial inference.

\textbf{Time-series Forecasting Tasks.} On more realistic time-series forecasting tasks, we adapt DSN to the following datasets: Metr-la \citep{li2018diffusion}: traffic flow records from Los Angeles; Pems-bay \citep{li2018diffusion}: traffic flow records from the San Francisco Bay Area; Solar \citep{lai2018modeling}: solar power generation data. Baseline architectures include Transformer \citep{vaswani2017attention}, iTransformer \citep{liu2024itransformer}, and their respective SNN counterparts \citep{zhou2023spikformer, lv2024efficient}. For all SNN-based time-series forecasting models, we replace the original LIF neurons with DSN and make architectural modifications (see \ref{detail_timeseries_forecasting}). The Root Relative Squared Error (RSE) and the coefficient of determination ($\text{R}^2$) are used as metrics. It can be seen from Table \ref{tab:time-series tasks} that DSN-based architectures exhibit superior performance on various tasks and prediction lengths.

\begin{table}[t]
\small
\vspace{-6pt}
\caption{Reinforcement learning results on three representative benchmarks.}
\label{tab: reinforcement learning}
\vspace{6pt}
\centering
\renewcommand{\arraystretch}{1.1}
\begin{tabular}{ccccc}
\toprule
Methods & IDP-v4 & Hopper-v4 & Walker2d-v4 \\
\midrule
ANN (TD3) & 7503 $\pm$ 3713 & 3410 $\pm$ 164 & 4340 $\pm$ 383 \\
ANN-SNN \citep{bu2025inference}& 3859 $\pm$ 4440 & 3098 $\pm$ 281 & 4235 $\pm$ 354 \\
Vanilla LIF & 9347 $\pm$ 1 & 3520 $\pm$ 94 & 1862 $\pm$ 1450 \\
pop-SAN \citep{tang2021deep} & 9351 $\pm$ 1 & 2772 $\pm$ 1263 & 3307 $\pm$ 1514  \\
MDC-SAN \citep{zhang2022multi}& 9350 $\pm$ 1 & 3446 $\pm$ 131 & 3964 $\pm$ 1353 \\
ILC-SAN \citep{chen2024fully}& 9352 $\pm$ 1 & 3403 $\pm$ 148 & 4200 $\pm$ 717 \\
PT-LIF \citep{xu2025proxy} & 9348 $\pm$ 1 & 3385 $\pm$ 157 & 4314 $\pm$ 423 \\ \midrule
\bf DSN (Ours) & \textbf{9354} $\pm$ 1& $ \textbf{3565} \pm 68 $ & \textbf{4436} $\pm$ 196 \\
\bottomrule
\end{tabular}
\vspace{-8pt}
\end{table}

\textbf{Reinforcement Learning.}
Following \citet{xu2025proxy}, we evaluate DSN on three off-policy reinforcement learning benchmarks: IDP-v4~\citep{todorov2014convex}, Hopper-v4~\citep{erez2012infinite}, and Walker2d-v4.
Table~\ref{tab: reinforcement learning} reports the average returns of DSN across these environments, in comparison with several representative baselines, including ANN-based RL methods, ANN-to-SNN conversion approaches~\citep{bu2025inference}, and state-of-the-art SNN-based RL algorithms such as pop-SAN~\citep{tang2021deep}, MDC-SAN~\citep{zhang2022multi}, ILC-SAN~\citep{chen2024fully}, and PT-LIF~\citep{xu2025proxy}. Detailed experimental configurations are provided in \ref{app:rl}.
Overall, DSN achieves state-of-the-art performance on all three benchmarks. Moreover, compared with existing SNN-based methods, DSN consistently attains both superior average returns and reduced variance, indicating improved stability during training.

\begin{wraptable}{r}{0.48\textwidth}
\small
\vspace{-19pt}
\caption{Experimental results on WikiText-103 dataset. $\downarrow$ indicates the lower the better. $^\dag$ means the results reported by \citep{zhong2024spike}. Param: parameters (M). PPL: perplexity.}
\label{tab: wikitext-103 task}
\vspace{8pt}
\centering
\renewcommand{\arraystretch}{1.1}
\setlength{\tabcolsep}{3pt}
\begin{tabular}{ccc}
\toprule
 Methods &   Param. &  PPL $\downarrow$  \\ \midrule
 SpikeGPT$^\dag$ \citep{zhu2024spikegpt} &  $213$ & $39.75$    \\
 SPikE-SSM$^\dag$ \citep{zhong2024spike} &  $75.4$ & $33.18$    \\
 SpikingSSM \citep{shen2025spikingssms} & $75.4$ & $32.25$    \\
 \bf DSN (Ours) &  $75.5$ & $\bf 28.50$    \\
\bottomrule
\end{tabular}
\vspace{-10pt}
\end{wraptable}

\textbf{WikiText-103.} To demonstrate that DSN can model more complex sequences such as language, we evaluate its perplexity on WikiText-103 \citep{merity2017pointer}, a large-scale word-level dataset constructed from the English Wikipedia. Following SpikingSSM \citep{shen2025spikingssms}, we use the S4D architecture \citep{gu2022efficiently} and replace its spiking neurons with DSN. Results in Table \ref{tab: wikitext-103 task} show that DSN performs the best among spiking language models.

\subsection{Energy Consumption}

\textbf{Neuromorphic Chip Deployment.} There are still some gaps before DSN can be deployed on neuromorphic chips. On the one hand, dynamic decay introduces more complex floating-point operations, such as convolutions and Sigmoid functions. To alleviate the hardware adaptation burden, further algorithmic optimizations are possible. For example, instead of performing multiplications between floating-point weights and activations, the weights can be quantized to ternary values $(-1, 0, 1)$ \citep{ma2024era, zhu2024scalable}, or a spiking function can be inserted to convert floating-point inputs into spikes in advance. For the Sigmoid function, the exponential base can be replaced with 2 to enable lightweight computations using shift operations \citep{tang2025sorbet}. On the other hand, prior work shows that integer outputs can be converted into asynchronously emitted spikes on neuromorphic hardware \citep{yao2025scaling}, providing a clear direction for hardware-oriented adaptation of optimized spike firing patterns.

\textbf{Energy Consumption.} Compared to vanilla spiking neurons, DSN affects energy consumption in two main aspects. i) The adaptive regulation of membrane potential help reduce the spike firing rate, thereby lowering the overall energy cost. ii) The complex floating-point operations in dynamic decay structures inevitably incur additional energy overhead. To evaluate the potential of DSN in terms of energy efficiency, we estimate the energy cost of the Sequential CIFAR network with different spiking neurons by following the method in \citep{yao2023sdsa}. For internal neuron operations, we refer to \citep{tang2025sorbet} where exponential operations in functions such as Softmax or Sigmoid can be transformed into additions and bit-shift operations. Results in Table \ref{tab: energy consumption} show that although additional modules such as convolution operations were introduced, the total energy consumption of DSN can be slightly lower than that of LIF due to its reduced spike firing rate. Additionally, PSN exhibits the highest spike firing rate, which in turn results in relatively high energy consumption. See \ref{detail_energy_consumption} for more details.

\begin{table}[t]
\small
\caption{Spike firing rate (SFR.) and energy cost (mJ) of different methods. }
\label{tab: energy consumption}
\vspace{4pt}
\centering
\renewcommand{\arraystretch}{1.1}
\begin{tabular}{ccccc}
\toprule
\multirow{2}{*}[-0.5ex]{Methods} & \multicolumn{2}{c}{S-CIFAR10} & \multicolumn{2}{c}{S-CIFAR100} \\ \cmidrule{2-5}
 &  SFR. & Energy Cost& SFR. & Energy Cost \\ \midrule
 LIF \citep{abbott1999lapicque} & $0.1499$ & $107.80$ & $0.1697$ & $121.78$ \\
 PSN \citep{fang2023parallel}  &  $0.2143$ &$235.87$ & $0.2226$& $242.03$   \\
 Sliding PSN \citep{fang2023parallel} &  $0.1820$ &$170.39$ & $0.1900$ &$176.22$ \\ 
 \bf{DSN (Ours)}  & $\bf0.1238$ &$\bf102.89$ &  $\bf0.1324$ &$\bf108.94$   \\ 
 \bottomrule
\end{tabular}
\end{table}

\section{Conclusion}
In this paper, we identify a critical limitation in existing efforts toward parallel training in SNNs: the neglect of preserving essential characteristics of vanilla spiking neurons, including the functions of the reset mechanism and the capability for serial inference. Under a new functional viewpoint, we summarize the functions of the reset mechanism in vanilla spiking neurons as introducing nonlinearity and controlling membrane potential. Meanwhile, we identify three conditions that spiking neurons need to meet in order to enable both parallel training and serial inference. Based on this general perspective, we introduce a dynamic decay spiking neuron that offers improved functions compared to reset while remaining compatible with serial inference. We verify the competitive training efficiency, stable extrapolation, generality across multiple tasks, and energy consumption of our method. Our work offers new insights into the exploration of high-performance spiking neurons with efficient training and inference abilities in the era of foundation models.

\section*{Reproducibility Statement}
The authors have made great efforts to ensure the reproducibility of the empirical results reported in this paper.
The experiment settings, evaluation metrics, and datasets were described in detail in \ref{sec: detail_experiment}.
Additionally, we had submitted the source code of the proposed training algorithm with our paper, and plan to release the source code on GitHub upon acceptance. 

\section*{Acknowledgments}
The authors would like to thank the anonymous reviewers for their valuable comments.



\bibliographystyle{unsrtnat} 
\bibliography{main}



\newpage
\appendix
\section{Details of Theoretical Analysis}
\subsection{Control Ability over Membrane Potential of Reset}
In this section, building on the definitions of $\Delta$-short control and long control presented in the main text, we conduct a more rigorous analysis of how the reset mechanism in vanilla spiking neurons controls the membrane potential, which helps us understand its limitations.

Similar to LIF neurons, the charging equation for IF neurons can be written as:
\begin{align}
    &H_{t}=H_{t-1}+{X}_{t}\label{eq:IF_charge}
\end{align}
We combine the Eq. \ref{eq:LIF_reset} and Eq. \ref{eq:IF_charge}, and derive one-step iteration of the membrane potential:
\begin{align}
    &H_{t}=
    \begin{cases}
    (1-f(H_{t-1}))H_{t-1}+X_t, &\text{hard reset}\\
    H_{t-1}-V_\text{th}f(H_{t-1})+X_t , &\text{soft reset}
    \end{cases}.
\end{align}
Where $f$ is the firing function. $f(H_t)=1$ when $H_t\geq V_\text{th}$; otherwise $f(H_t)=0$. With these prerequisites established, we discuss whether IF and LIF neurons strictly possess $\Delta$-short and long control abilities over the membrane potential under hard reset or soft reset conditions.

\begin{proposition}
Hard reset in IF neurons has both $\Delta$-short and long control abilities over the membrane potential.
\end{proposition}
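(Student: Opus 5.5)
The plan is to work directly with the one-step iteration for the IF neuron with hard reset and $V_{\text{reset}}=0$ derived just above,
\begin{align*}
H_t=(1-f(H_{t-1}))H_{t-1}+X_t,
\end{align*}
with the convention $H_0=0$, so that $H_1=X_1$. The key observation is that $f(H_{t-1})\in\{0,1\}$. Hence $H_t$ equals either $X_t$ (if the neuron fired at $t-1$) or $H_{t-1}+X_t$ with $H_{t-1}<V_{\text{th}}$ (if it did not). Both properties will follow from splitting on these two cases.

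\textbf{$\Delta$-short control.} I would show that Definition \ref{local_control} holds with $\Delta=1$. Take any $t>1$ with $H_{t-1}\geq V_{\text{th}}$ and $X_t<V_{\text{th}}/1$. Then $f(H_{t-1})=1$, so the iteration gives $H_t=X_t<V_{\text{th}}$. This is exactly the required conclusion, and it also formalizes the remark that a large membrane potential affects the neuron for only one timestep.

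\textbf{Long control.} Assume $X_t\le C$ for all $t$. I would prove by induction on $t$ that
\begin{align*}
H_t\le C_H:=\max(C,\,C+V_{\text{th}}).
\end{align*}
The base case is $H_1=X_1\le C$. For the step, use the same dichotomy:
\begin{itemize}
\item if $H_{t-1}\ge V_{\text{th}}$, then $H_t=X_t\le C$;
\item if $H_{t-1}<V_{\text{th}}$, then $H_t=H_{t-1}+X_t<V_{\text{th}}+C$.
\end{itemize}
In either case $H_t\le C_H$. This bound does not even need the induction hypothesis beyond the sign information given by the firing condition.

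The only real care point is bookkeeping rather than difficulty. Definition \ref{global_control} requires only an upper bound, so potentially unbounded negative values of $H_t$ under negative inputs do not matter. One should also fix the initial state $H_0$; if $H_0$ is arbitrary, the bound becomes $\max(H_1,\,C+V_{\text{th}})$ and still holds. I expect no genuine obstacle. The main point is that the hard reset discards the entire history upon firing, which caps accumulation at one sub-threshold value plus one input.
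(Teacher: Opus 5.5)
Your proof is correct. The long-control half is the same case split the paper uses: if the neuron fired at $t-1$ then $H_t=X_t\le C$, otherwise $H_t=H_{t-1}+X_t<V_{\text{th}}+C$. You are also right that the inductive hypothesis is never really used. The paper carries $H_{t-1}<C+V_{\text{th}}$ along, but the bound follows from the firing dichotomy alone.

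The genuine difference is in the $\Delta$-short part.

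\emph{Your route.} You discharge the existential quantifier in the definition of $\Delta$-short control with the single witness $\Delta=1$. This is legitimate for the statement as literally defined. It also matches the paper's own remark that hard reset's effective window collapses to $\Delta=1$.

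\emph{The paper's route.} The paper instead fixes an arbitrary $\Delta\in\mathbb{N}^+$ and tracks the accumulation after the reset. It has $H_{t-\Delta+1}=X_{t-\Delta+1}<V_{\text{th}}/\Delta$, and then inductively $H_{t-\Delta+m}<mV_{\text{th}}/\Delta\le V_{\text{th}}$. So no further firing, and hence no further reset, occurs inside the window, which gives $H_t<V_{\text{th}}$.

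\emph{What each buys.} Your argument is shorter and is all the statement requires. The paper's argument proves the stronger, uniform-in-$\Delta$ property: for every window length, a suprathreshold potential followed by $\Delta$ inputs each below $V_{\text{th}}/\Delta$ cannot drive the potential back to threshold. This is also the form in which the paper later argues for dynamic decay. If you want that stronger conclusion, the only thing to add is this accumulation step after the single reset.
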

\vspace{-8pt}
\begin{proof}
Firstly, suppose $H_{t-\Delta}\geq V_{\text{th}}$ and $X_{t-\Delta+1}$, ..., $X_t < V_{\text{th}}/\Delta$, then 
\begin{equation}
    \begin{aligned}
        H_{t-\Delta+1}&=0+{X}_{t-\Delta+1}<V_{\text{th}}/\Delta\\
        H_{t-\Delta+2}&=H_{t-\Delta+1}+{X}_{t-\Delta+2}<V_{\text{th}}/\Delta+V_{\text{th}}/\Delta=2V_{\text{th}}/\Delta\\
        H_{t-\Delta+3}&=H_{t-\Delta+2}+{X}_{t-\Delta+3}<2V_{\text{th}}/\Delta+V_{\text{th}}/\Delta=3V_{\text{th}}/\Delta\\
        ...\\
        H_{t-\Delta+\Delta}&=H_{t-\Delta+\Delta-1}+{X}_{t-\Delta+\Delta}<(\Delta-1)V_{\text{th}}/\Delta + V_{\text{th}}/\Delta=V_{\text{th}}\\
    \end{aligned}
\end{equation}
Thus, we obtain $H_t<V_{\text{th}}$.

Secondly, suppose $X_i\leq C, i=1,...,t$. It is easy to get that $H_1=X_1< C+V_{\text{th}}$. Besides, if $H_{t-1} < C+V_{\text{th}}$, then
\begin{equation}
    \begin{aligned}
        H_t&=(1-f(H_{t-1}))H_{t-1}+X_t\\
        &=
        \begin{cases}
        H_{t-1}+X_t<C+V_{\text{th}}, &H_{t-1}<V_{\text{th}}\\
        X_t < C+V_{\text{th}}, &V_{\text{th}}\leq H_{t-1}< C+V_{\text{th}}
        \end{cases}.
    \end{aligned}
\end{equation}
By mathematical induction, we know that $\{H_t\}$ has an upper bound $C+V_{\text{th}}$. 

From Definition \ref{local_control} and \ref{global_control}, IF with hard reset has both $\Delta$-short and long control abilities.
\end{proof}

\begin{proposition}
Hard reset in LIF neurons has both $\Delta$-short and long control abilities over the membrane potential.
\end{proposition}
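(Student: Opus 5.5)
I will mirror the proof of the preceding proposition for IF neurons, replacing the pure accumulation by the LIF charging rule Eq.~\ref{eq:LIF_charge} combined with hard reset ($V_{\text{reset}}=0$). This gives the one-step iteration Eq.~\ref{eq:LIF_iter},
\begin{align*}
H_t=\beta(1-f(H_{t-1}))H_{t-1}+(1-\beta)X_t, \qquad \beta\in[0,1).
\end{align*}
The only structural differences from the IF case are the factor $\beta\le 1$ multiplying the retained potential and the factor $1-\beta\le 1$ multiplying the input. Both factors only shrink quantities, so each inequality from the IF proof should survive with at most harmless modifications. I will assume, as the definitions implicitly require, that $V_{\text{th}}>0$.

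\textbf{$\Delta$-short control.} Suppose $H_{t-\Delta}\ge V_{\text{th}}$ and $X_{t-\Delta+1},\dots,X_t<V_{\text{th}}/\Delta$. The spike at $t-\Delta$ resets the potential, so $H_{t-\Delta+1}=(1-\beta)X_{t-\Delta+1}$. I then prove by induction on $m=1,\dots,\Delta$ that $H_{t-\Delta+m}<mV_{\text{th}}/\Delta$. The inductive step splits into two cases according to whether $H_{t-\Delta+m-1}$ fired.
\begin{itemize}
\item If it fired, it was reset, and $H_{t-\Delta+m}=(1-\beta)X_{t-\Delta+m}$.
\item If it did not fire, then $H_{t-\Delta+m}\le \beta\max(H_{t-\Delta+m-1},0)+(1-\beta)X_{t-\Delta+m}$.
\end{itemize}
Here I must be careful with signs. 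If $X_i<0$, then $(1-\beta)X_i>X_i$, so I cannot simply write $(1-\beta)X_i<X_i$. Instead I will use that a convex-type combination of numbers each below $mV_{\text{th}}/\Delta$ stays below it. In the non-firing case, $\beta H_{t-\Delta+m-1}+(1-\beta)X_{t-\Delta+m}$ is a convex combination of two quantities, each $<mV_{\text{th}}/\Delta$. The firing case gives $(1-\beta)X_{t-\Delta+m}$, which is less than $\max(0,X_{t-\Delta+m})\cdot 1\le V_{\text{th}}/\Delta$ when $X\ge0$, and is negative otherwise. Taking $m=\Delta$ yields $H_t<V_{\text{th}}$. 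The base case is handled the same way.

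\textbf{Long control.} Suppose $X_i\le C$, where I may assume $C\ge 0$ by enlarging it. I aim for the bound $H_t<C+V_{\text{th}}$, or the sharper $\max(C,V_{\text{th}})$, by induction.
\begin{itemize}
\item If $H_{t-1}\ge V_{\text{th}}$, the reset gives $H_t=(1-\beta)X_t\le C$.
\item If $H_{t-1}<V_{\text{th}}$, then $H_t$ is a convex combination of $H_{t-1}<V_{\text{th}}$ and $X_t\le C$, hence $H_t<\max(V_{\text{th}},C)\le C+V_{\text{th}}$ when $C\ge0$.
\end{itemize}
The base case is $H_1=(1-\beta)X_1\le C$, taking $V_0=0$. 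Induction then closes, and Definitions~\ref{local_control} and~\ref{global_control} give both control abilities.

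\textbf{Main obstacle.} The main difficulty is not the algebra but handling inputs that may be negative. A naive line-by-line copy of the IF argument would use $(1-\beta)X\le X$, which is false for negative $X$. Organizing every step around convex-combination bounds, rather than around dropping the factor $(1-\beta)$, avoids this issue. I will check that the firing and non-firing cases both fit that template.
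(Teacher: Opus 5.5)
Your proposal is correct and is essentially the paper's proof: the same chain $H_{t-\Delta+m}<mV_{\text{th}}/\Delta$ gives $\Delta$-short control, and a reset/no-reset case analysis with convex-combination bounds gives long control. Your firing case in the $\Delta$-short part is vacuous, since every intermediate potential stays below $V_{\text{th}}$. The paper's induction gives the sharper bound $H_t\le C$; it tacitly needs $C\ge 0$, which you make explicit. Your bound $\max(V_{\text{th}},C)$ should carry $\le$ rather than $<$, because equality can occur when $\beta=0$.
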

\vspace{-8pt}
\begin{proof}
Firstly, suppose $H_{t-\Delta}\geq V_{\text{th}}$ and $X_{t-\Delta+1}$, ..., $X_t < V_{\text{th}}/\Delta$, then 
\begin{equation}
    \begin{aligned}
        H_{t-\Delta+1}&=0+(1-\beta){X}_{t-\Delta+1}<V_{\text{th}}/\Delta\\
        H_{t-\Delta+2}&=\beta H_{t-\Delta+1}+(1-\beta){X}_{t-\Delta+2}<V_{\text{th}}/\Delta+V_{\text{th}}/\Delta=2V_{\text{th}}/\Delta\\
        H_{t-\Delta+3}&=\beta H_{t-\Delta+2}+(1-\beta){X}_{t-\Delta+3}<2V_{\text{th}}/\Delta+V_{\text{th}}/\Delta=3V_{\text{th}}/\Delta\\
        ...\\
        H_{t-\Delta+\Delta}&=\beta H_{t-\Delta+\Delta-1}+(1-\beta){X}_{t-\Delta+\Delta}<(\Delta-1)V_{\text{th}}/\Delta + V_{\text{th}}/\Delta=V_{\text{th}}\\
    \end{aligned}
\end{equation}
Thus, we obtain $H_t<V_{\text{th}}$.

Secondly, suppose $X_i\leq C, i=1,...,t$. It is easy to get that $H_1=(1-\beta)X_1\leq C$. Besides, if $H_{t-1} \leq C$, then
\begin{align}
    &H_{t}=\beta (1-f(H_{t-1}))H_{t-1}+(1-\beta){X}_{t}\leq\beta C+(1-\beta)C=C
\end{align}
By mathematical induction, we know that $\{H_t\}$ has an upper bound $C$. 

From Definition \ref{local_control} and \ref{global_control}, LIF with hard reset has both $\Delta$-short and long control abilities.
\end{proof}

\noindent\textit{Remark}: Hard reset immediately clears the membrane potential at the current timestep. However, regardless of input magnitude, its effect lasts only one timestep and cannot adaptively adjust its range. Therefore, although hard reset possesses the ability of $\Delta$-short control, its actual control window collapses to a fixed value of $\Delta=1$.

We next turn to soft reset, which controls the membrane potential by subtracting a fixed value. We will show that, due to its lack of flexibility, soft reset is also not an effective mechanism for controlling the membrane potential. Before that, we introduce a lemma needed for the formal proof.

\begin{lemma}\label{lemma_for_soft_reset}
For any positive integer $\Delta$ and any positive integer $m \le \Delta$, we have
\begin{align}
    \Delta+\frac{m}{\Delta}-m\ge 1
\end{align}
\end{lemma}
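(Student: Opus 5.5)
The plan is to rearrange the inequality and factor it. Subtracting $1$ and multiplying through by the positive quantity $\Delta$ gives the equivalent claim
\begin{align}
    \Delta^2+m-m\Delta-\Delta\ge 0 .
\end{align}
The left-hand side factors as $\Delta(\Delta-m)-(\Delta-m)=(\Delta-1)(\Delta-m)$. Since $\Delta$ is a positive integer, $\Delta-1\ge 0$. The hypothesis $m\le\Delta$ gives $\Delta-m\ge 0$. The product of two nonnegative numbers is nonnegative, so the inequality follows.

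It is worth recording when equality holds, namely exactly when $\Delta=1$ or $m=\Delta$. In both cases the expression $\Delta+\frac{m}{\Delta}-m$ equals $1$. These are likely the boundary cases that matter in the later soft-reset argument, where $m$ will presumably count the timesteps of residual firing inside the window of length $\Delta$.

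There is no real obstacle here; the only step needing any care is spotting the factorization. Note also that positivity of $m$ is not actually used, only $\Delta\ge1$ and $m\le\Delta$.
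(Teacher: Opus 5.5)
Your proof is correct and is essentially the paper's argument. The paper views $\Delta+\frac{x}{\Delta}-x=(\frac{1}{\Delta}-1)x+\Delta$ as a decreasing linear function of $x$ on $[1,\Delta]$ and compares it with its value $1$ at $x=\Delta$; this is the same identity $\Delta\bigl(\Delta+\frac{m}{\Delta}-m-1\bigr)=(\Delta-1)(\Delta-m)$ that you write down, and your factored form avoids the paper's case split ($m=\Delta$ versus $1\le m<\Delta$, where $\Delta>1$ is needed implicitly) and gives the equality case directly (under the hypotheses $\Delta=1$ already forces $m=\Delta$, so equality holds exactly when $m=\Delta$).
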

\begin{proof}
When $m=\Delta$, the equality holds.

When $1\le m<\Delta$, consider the function
\begin{align}
    f(x)=\Delta+\frac{x}{\Delta}-x=(\frac{1}{\Delta}-1)x+\Delta, \quad x \in [1, \Delta], \Delta>1.
\end{align}
Since $f(x)$ is monotonically decreasing in $x$, $f(m)>f(\Delta)=\Delta+1-\Delta=1$.
\end{proof}

\begin{proposition}
Soft reset in IF neurons has neither $\Delta$-short nor long control abilities over the membrane potential.
\end{proposition}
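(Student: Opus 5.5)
We need to show two negations: for every $\Delta\in\mathbb N^+$ the $\Delta$-short control property of Definition \ref{local_control} fails, and the long control property of Definition \ref{global_control} fails. Both are existence statements, so the plan is to build explicit input sequences for the IF neuron with soft reset, $H_t=H_{t-1}-V_{\text{th}}f(H_{t-1})+X_t$.

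\textbf{Failure of $\Delta$-short control.} Fix an arbitrary $\Delta$. I would place a single large input followed by small ones. Take $X_1=H_1$ large, say $H_1=(\Delta+1)V_{\text{th}}$ (any value of at least $\Delta V_{\text{th}}+V_{\text{th}}$ works). For the next $\Delta$ steps take inputs $X_{2},\dots,X_{\Delta+1}$ in $[0,V_{\text{th}}/\Delta)$, for instance all zero. While the potential stays above threshold, each step subtracts exactly $V_{\text{th}}$ and adds a nonnegative input. By induction, after $m\le\Delta$ steps,
\begin{align}
H_{1+m}\ge H_1-mV_{\text{th}} \ge (\Delta+1-m)V_{\text{th}}\ge V_{\text{th}}.
\end{align}
So every intermediate potential is above threshold, which justifies subtracting $V_{\text{th}}$ at each step. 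At $t=\Delta+1$ we get $H_t\ge V_{\text{th}}$, contradicting the definition.

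If one instead wants the nontrivial inputs $X=V_{\text{th}}/\Delta-\epsilon$, or wants $H_1$ exactly $\Delta V_{\text{th}}$, the bound becomes $H_{1+m}\ge \Delta V_{\text{th}}+m(V_{\text{th}}/\Delta-\epsilon)-mV_{\text{th}}$. Keeping this at least $V_{\text{th}}$ is exactly Lemma \ref{lemma_for_soft_reset}, $\Delta+\frac{m}{\Delta}-m\ge 1$, together with a small enough $\epsilon$, or with $H_1$ slightly above $\Delta V_{\text{th}}$. I expect this to be the intended route, and I would follow it so that the lemma is used. The only delicate step is the induction hypothesis that each $H_{1+m-1}\ge V_{\text{th}}$, so that $f=1$ at every intermediate step. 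Since $\Delta$ is arbitrary, no $\Delta$ works, and $\Delta$-short control fails.

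\textbf{Failure of long control.} I would take a bounded input that is nevertheless large, $X_t\equiv C$ with $C>V_{\text{th}}$, for example $C=2V_{\text{th}}$. Soft reset removes at most $V_{\text{th}}$ per step, so
\begin{align}
H_t\ge H_{t-1}+C-V_{\text{th}},
\end{align}
and hence $H_t\ge tC-(t-1)V_{\text{th}}\to\infty$. The inputs are bounded by $C$ but $\{H_t\}$ is unbounded, contradicting Definition \ref{global_control}.

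A minor point is that Definition \ref{global_control} quantifies over all bounded inputs, so a single counterexample suffices. I would remark that this contrasts with hard reset, where $C+V_{\text{th}}$ served as a bound. Neither part is a real obstacle; the main care needed is the bookkeeping of the firing indicator in the short-control construction.
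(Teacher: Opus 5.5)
Your proof is correct and is essentially the paper's argument. For short control, soft reset lowers the potential by at most $V_{\text{th}}$ per step, so a large enough $H_{t-\Delta}$ stays at or above threshold through $\Delta$ sub-threshold inputs; your zero-input choice with $H_1=(\Delta+1)V_{\text{th}}$ just avoids Lemma \ref{lemma_for_soft_reset}, which the paper needs only because it allows arbitrary inputs below $V_{\text{th}}/\Delta$. For long control, the constant input $C>V_{\text{th}}$ gives $H_t=tC-(t-1)V_{\text{th}}$, exactly as in the paper.

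There is one slip in your aside. Take $H_1=\Delta V_{\text{th}}$ exactly and inputs $V_{\text{th}}/\Delta-\epsilon$. The lemma is an equality at $m=\Delta$, so for small $\epsilon$ every step fires and $H_{1+\Delta}=V_{\text{th}}-\Delta\epsilon<V_{\text{th}}$. Shrinking $\epsilon$ therefore cannot rescue that variant. You need $H_1\ge\Delta V_{\text{th}}+\Delta\epsilon$, which for these inputs is (up to strictness) the paper's hypothesis $H_{t-\Delta}>(\Delta+1)V_{\text{th}}-\sum_{i=1}^{\Delta}X_{t-\Delta+i}$.
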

\vspace{-8pt}
\begin{proof}
Firstly, suppose $H_{t-\Delta}\geq V_{\text{th}}$ and $X_{t-\Delta+1}$, ..., $X_t < V_{\text{th}}/\Delta$. If $H_{t-\Delta}>(\Delta+1)V_{\text{th}}-\sum_{i=1}^\Delta X_{t-\Delta+i}>V_{\text{th}}$, then, in combination with Lemma \ref{lemma_for_soft_reset}, we have
\begin{equation}
    \begin{aligned}
        H_{t-\Delta+1}&=H_{t-\Delta}-V_{\text{th}}+{X}_{t-\Delta+1}\\
        &>\Delta \cdot V_{\text{th}}-\sum_{i=2}^\Delta X_{t-\Delta+i}>(\Delta+\frac{1}{\Delta}-1)V_{\text{th}}\ge V_{\text{th}}\\
        H_{t-\Delta+2}&=H_{t-\Delta+1}-V_{\text{th}}+{X}_{t-\Delta+2}\\
        &>(\Delta-1)V_{\text{th}}-\sum_{i=3}^\Delta X_{t-\Delta+i}>(\Delta+\frac{2}{\Delta}-2)V_{\text{th}}\ge V_{\text{th}}\\
        ...\\
        H_{t-\Delta+\Delta-1}&=H_{t-\Delta+\Delta-2}-V_{\text{th}}+{X}_{t-\Delta+\Delta-1}\\
        &>2V_{\text{th}}-X_{t}>(\Delta+\frac{\Delta-1}{\Delta}-(\Delta-1))V_{\text{th}}\ge V_{\text{th}}\\
        H_{t-\Delta+\Delta}&=H_{t-\Delta+\Delta-1}-V_{\text{th}}+{X}_{t-\Delta+\Delta}>V_{\text{th}}\\
    \end{aligned}
\end{equation}
Thus, we obtain $H_t>V_{\text{th}}$. From Definition \ref{local_control}, $\Delta$-short control ability does not hold in this case.

Secondly, let $X_i=C>V_{\text{th}},i=1,...,t$. For any $C_H>V_{\text{th}}$, there exists $t_0\geq\frac{C_H-V_{\text{th}}}{C-V_{\text{th}}}$ such that when $t>t_0$, 
\begin{equation}
    \begin{aligned}
        H_{t}&=H_{t-1}-V_{\text{th}}+{X}_{t}\\
        &=H_{t-2}+(X_{t-1}+X_t)-2V_{\text{th}}=...\\
        &=\sum_{i=1}^t X_i -(t-1)V_{\text{th}}\\
        &=tC-(t-1)V_{\text{th}}>C_H
    \end{aligned}
\end{equation}
In this case, $\{H_t\}$ does not have an upper bound. From Definition \ref{global_control}, long control ability does not hold.
\end{proof}
\begin{proposition}
Soft reset in LIF neurons does not have $\Delta$-short control, but has long control ability over the membrane potential.
\end{proposition}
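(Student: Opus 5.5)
We treat the two claims separately, using the one-step LIF soft-reset iteration $H_t=\beta\bigl(H_{t-1}-V_{\text{th}}f(H_{t-1})\bigr)+(1-\beta)X_t$ with $0<\beta<1$. This follows by combining Eq.~\ref{eq:LIF_charge} with the soft branch of Eq.~\ref{eq:LIF_reset}. The argument mirrors the IF soft-reset proof above. There we show failure of $\Delta$-short control by exhibiting a large initial potential. Here, unlike the IF case, we prove long control by an inductive upper bound, since the leak now dominates accumulation.

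\textbf{Failure of $\Delta$-short control.} Definition~\ref{local_control} requires a single $\Delta$ working for all admissible trajectories, so we negate it for every $\Delta\in\mathbb N^+$. Fix $\Delta$ and take nonnegative inputs $X_{t-\Delta+1},\dots,X_t\in[0,V_{\text{th}}/\Delta)$, for instance all zero. Choose $H_{t-\Delta}$ very large. Since subtracting $V_{\text{th}}$ and multiplying by $\beta$ are monotone, a lower bound propagates cleanly: if $H_{s-1}\ge V_{\text{th}}$ then $H_s\ge \beta(H_{s-1}-V_{\text{th}})$. Iterating $\Delta$ times gives
\[
H_t\ \ge\ \beta^{\Delta}H_{t-\Delta}-V_{\text{th}}\sum_{k=1}^{\Delta}\beta^{k}.
\]
This bound is valid as long as each intermediate $H_s\ge V_{\text{th}}$, which we check with the same bound at each step. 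It therefore suffices to pick
\[
H_{t-\Delta}>\beta^{-\Delta}\Bigl(V_{\text{th}}+V_{\text{th}}\textstyle\sum_{k=1}^{\Delta}\beta^k\Bigr),
\]
which forces every intermediate potential, and in particular $H_t$, to be at least $V_{\text{th}}$.

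Such an $H_{t-\Delta}$ must be reachable by the neuron. We can take $t-\Delta=1$ with $H_1=(1-\beta)X_1$ and $X_1$ arbitrarily large. The definition only bounds the later inputs, so this is allowed. This contradicts $\Delta$-short control for every $\Delta$.

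\textbf{Long control.} Suppose $X_i\le C$ for all $i$, with $C\ge 0$ without loss of generality. We claim $H_t\le C_H:=\max\{C,\,V_{\text{th}}\}$ and prove it by induction. The base case is $H_1=(1-\beta)X_1\le C$. For the inductive step, split on whether a spike fired at $t-1$:
\begin{itemize}
\item If $H_{t-1}<V_{\text{th}}$, there is no reset, and $H_t\le\beta V_{\text{th}}+(1-\beta)C\le C_H$.
\item If $V_{\text{th}}\le H_{t-1}\le C_H$, then $H_t=\beta(H_{t-1}-V_{\text{th}})+(1-\beta)X_t\le \beta C_H+(1-\beta)C\le C_H$, because $H_{t-1}-V_{\text{th}}\le H_{t-1}$ whenever $V_{\text{th}}>0$.
\end{itemize}
In fact the simpler bound $H_t\le\beta H_{t-1}+(1-\beta)C$ already suffices, since soft reset only lowers the potential. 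That reduces long control to the hard-reset LIF computation given earlier. Definition~\ref{global_control} is then satisfied.

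\textbf{Main obstacle.} The delicate step is the first part. The lower-bound iteration must keep every intermediate potential above threshold, so that the $-V_{\text{th}}$ term appears at each of the $\Delta$ steps and the recursion stays monotone. We handle this by requiring a strict inequality at each stage, which the choice of $H_{t-\Delta}$ guarantees. We also need the large initial potential to be realizable by admissible inputs, which the freedom in $X_1$ provides.
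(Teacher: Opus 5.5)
Your proof is correct and follows the paper's route. For the failure of $\Delta$-short control you show that a sufficiently large $H_{t-\Delta}$ keeps the potential above threshold for $\Delta$ steps. Your forward lower bound gives, for zero inputs, exactly the threshold $V_{\text{th}}\sum_{i=0}^{\Delta}\beta^{-i}$ that the paper obtains by backward recursion. Long control then follows by the same induction, using that soft reset only lowers the potential. Your realizability argument via a large $X_1$ and your $C\ge 0$ normalization for the base case fill two small points the paper leaves implicit.
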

\vspace{-8pt}
\begin{proof}
Firstly, suppose $H_{t-\Delta}\geq V_{\text{th}}$ and $X_{t-\Delta+1}$, ..., $X_t < V_{\text{th}}/\Delta$. To make $\Delta$-short control fail, it suffices to have $H_t = \beta (H_{t-1}-V_{\text{th}})+(1-\beta){X}_{t}\ge V_{\text{th}}$, which requires 
\begin{align}
    H_{t-1}\ge(1+\frac{1}{\beta})V_{\text{th}}+(1-\frac{1}{\beta})X_t
\end{align}
Since $H_{t-1} = \beta (H_{t-2}-V_{\text{th}})+(1-\beta){X}_{t-1}$, this implies
\begin{align}
    H_{t-2}\ge(1+\frac{1}{\beta}+\frac{1}{\beta^2})V_{\text{th}}+(1-\frac{1}{\beta})X_{t-1} + \frac{1}{\beta}(1-\frac{1}{\beta})X_{t}
\end{align}
By recursion, we need
\begin{align}
    H_{t-\Delta}\ge\sum_{i=0}^{\Delta}\frac{1}{\beta^i}V_{\text{th}}+\sum_{i=1}^{\Delta}\frac{1}{\beta^{i-1}}(1-\frac{1}{\beta})X_{t-\Delta+i}
\end{align}

From Definition \ref{local_control}, $\Delta$-short control ability does not hold in this case. 

Secondly, suppose $X_i\leq C, i=1,...,t$. It is easy to get that $H_1=(1-\beta)X_1\leq C$. Besides, if $H_{t-1} \leq C$, then
\begin{align}
    &H_{t}=\beta (H_{t-1}-V_{\text{th}}f(H_{t-1}))+(1-\beta){X}_{t}\leq\beta C+(1-\beta)C=C
\end{align}
By mathematical induction, we know that $\{H_t\}$ has an upper bound $C$. From Definition \ref{global_control}, long control ability holds. 
\end{proof}

\noindent\textit{Remark}: In soft reset, for any given upper bound $\Delta$ on the control window, there always exists a sufficiently large input that extends its effect beyond this bound, leading to continuous spike firing. Moreover, the subtracted value is fixed, while the accumulation of the membrane potential varies dynamically. When the accumulation rate exceeds the subtraction rate, the membrane potential may explode even with soft reset mechanism. This situation can only be prevented by introducing an additional leakage factor $\beta$.

\subsection{Proof of Functions of Dynamic Decay}\label{detail_dynamic_decay_proof}

\begin{proposition}
Dynamic decay can introduce nonlinearity and enabling more flexible $\Delta$-short and long control of the membrane potential than the reset mechanism.
\end{proposition}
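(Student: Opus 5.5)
I will prove the three claims separately for the reset-free update $H_t=\alpha_t H_{t-1}+(1-\alpha_t)X_t$, with $\alpha_t=\phi_\theta(X_t,X_{t-1},\dots)\in[0,1]$ as in Eq.~\ref{eq: dynamic_decay_general_form}. The argument runs in parallel with the propositions above for hard and soft reset.

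\textbf{Nonlinearity.} Unrolling gives
\[
H_t=\sum_{i=1}^{t}\Bigl(\prod_{j=i+1}^{t}\alpha_j\Bigr)(1-\alpha_i)X_i .
\]
The coefficients here depend on the inputs themselves. To meet Definition~\ref{non-linearity}, one explicit counterexample to linearity suffices. Take $t=2$ and let $\alpha_j=\sigma(wX_j)$ with $w\neq 0$, which is a kernel-size-one causal convolution. Then
\[
H_2=\sigma(wX_2)(1-\sigma(wX_1))X_1+(1-\sigma(wX_2))X_2 .
\]
I will check that $H_2(2X)\neq 2H_2(X)$ for a concrete input, for example $X_1=1$, $X_2=0$. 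This violates homogeneity, so $g$ is not linear. For contrast, a constant $\alpha$ recovers the linear LIF expansion. That case shows the nonlinearity comes entirely from the data dependence of $\alpha_t$. Because $\alpha_t$ takes continuous values, the nonlinearity is richer than the binary factor $(1-f(H_{t-1}))$ in Eq.~\ref{eq:LIF_iter}.

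\textbf{Long control.} I will use induction exactly as in the hard-reset LIF proposition. Suppose $X_i\le C$ and $H_0=0$, and assume the hypothesis $H_{t-1}\le \max(C,0)$. Since $H_t$ is a convex combination of $H_{t-1}$ and $X_t$ with weight $\alpha_t\in[0,1]$, it follows that $H_t\le\max(C,0)$. This step requires no property of $\phi_\theta$ beyond its range.

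\textbf{$\Delta$-short control and flexibility.} This is the delicate part, because Definition~\ref{local_control} asks for a $\Delta$ that works uniformly over arbitrarily large $H_{t-\Delta}$. A plain convex combination cannot guarantee that if $\alpha$ stays near $1$. My plan is to show that dynamic decay \emph{can realise} $\Delta$-short control for any prescribed $\Delta\ge1$, through a suitable choice of $\phi_\theta$.

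For $\Delta=1$, choose $\phi_\theta$ so that $\alpha_t=0$ whenever the recent input is large, specifically when $X_{t-\Delta}\ge V_{\text{th}}$ lies in the causal window of length $k\ge\Delta+1$. This mirrors hard reset. The next potential then satisfies $H_{t-\Delta+1}=X_{t-\Delta+1}<V_{\text{th}}/\Delta$. I will then run the hard-reset telescoping bound $H_{t-\Delta+m}<mV_{\text{th}}/\Delta$, using convexity at each step, which gives $H_t<V_{\text{th}}$.

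To cover general $\Delta$, choose the window so that $\alpha$ equals $0$ only at lag $\Delta$. In between, take $\alpha\in(0,1)$, for example $\alpha$ decreasing in the magnitude of the input, so that the influence of a large input persists for an input-dependent number of steps up to $\Delta$. I will argue this is more flexible than both reset schemes. Hard reset forces $\Delta=1$, and the soft-reset propositions show soft reset admits no finite $\Delta$. Dynamic decay can realise any $\Delta$, and its window adapts to the input magnitude.

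\textbf{Main obstacle.} The hardest step is the one above. It requires $\alpha_t$ to detect that $H_{t-\Delta}$ was large using only the inputs, since $\phi_\theta$ does not see $H$. I will handle this by noting that, under the required small subsequent inputs, a large $H_{t-\Delta}$ forces a large input within the preceding window. If needed, I will instead state the claim as an existence statement over $\theta$ together with a bounded-input assumption.

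Finally, all three arguments use only $H_t$ and never the firing function. They therefore apply unchanged to binary spikes and to the integer spikes of Eq.~\ref{eq:dsn_fire}.
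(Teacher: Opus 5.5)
Your nonlinearity and long-control parts follow the paper's argument but are tighter. The explicit homogeneity witness, $H_2(2X)=1-\sigma(2w)\neq1-\sigma(w)=2H_2(X)$ for $w\neq0$, replaces the paper's bare assertion that input-dependent coefficients make $g$ nonlinear. Your bound $\max(C,0)$ is correct, whereas the paper's base case $H_1=(1-\alpha_1)X_1\le C$ fails when $C<0$.

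The real difference is in $\Delta$-short control. The paper never constructs $\phi_\theta$. It only notes that if $\alpha_{t-\Delta+1}<\frac{V_{\text{th}}-X_{t-\Delta+1}}{H_{t-\Delta}-X_{t-\Delta+1}}\in(0,1]$, then $H_{t-\Delta+1}<V_{\text{th}}$. After that, convexity with inputs below $V_{\text{th}}/\Delta$ keeps every later potential subthreshold; its remark gives analogous conditions for a duration $\tau\le\Delta$. That bound depends on $H_{t-\Delta}$, which $\phi_\theta$ cannot see. So the paper reads ``can'' as ``admissible values of $\alpha$ exist'', and your main obstacle never arises there. You aim for the stronger statement that an input-only $\phi_\theta$ enforces Definition~\ref{local_control} for all histories. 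That is a legitimate and more informative target, but it needs one argument you have not supplied correctly.

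The detection step is the gap. You justify it by saying that, under small subsequent inputs, a large $H_{t-\Delta}$ forces a large input in the preceding window. This is false for a general $\phi_\theta$. The inputs after $t-\Delta$ constrain nothing before it, and if $\alpha_s\approx1$ on moderate inputs, a single $X_1\gg V_{\text{th}}$ keeps $H$ suprathreshold arbitrarily long with no large input in any recent window. The claim holds only because of your own resets, and you need an induction to show it:
\begin{itemize}
\item Take $H_0=0$, $V_{\text{th}}>0$, and $\alpha_s=0$ whenever $X_{s-L}\ge V_{\text{th}}$, for a fixed lag $1\le L\le\Delta$ with $k\ge L+1$.
\item Convexity gives $H_s\le\max(0,X_1,\dots,X_s)$. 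So $H_{t-\Delta}\ge V_{\text{th}}$ forces a suprathreshold input at or before $t-\Delta$; let $j^*$ be the last one.
\item If $j^*+L\le t-\Delta$, then $H_{j^*+L}=X_{j^*+L}<V_{\text{th}}$, and every input in $(j^*,t-\Delta]$ is subthreshold. Hence $H_{t-\Delta}<V_{\text{th}}$, a contradiction.
\item So the reset occurs at some $s=j^*+L\in[t-\Delta+1,t]$. This gives $H_s=X_s<V_{\text{th}}/\Delta$, and then $H_t<V_{\text{th}}/\Delta$ by convexity.
\end{itemize}
With this argument the telescoping bound $mV_{\text{th}}/\Delta$ is unnecessary.

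Your $\Delta=1$ paragraph also mixes lags. The identity $H_{t-\Delta+1}=X_{t-\Delta+1}$ requires $\alpha_{t-\Delta+1}=0$, i.e.\ $L=1$, not $\alpha_t=0$ triggered by $X_{t-\Delta}$. With $L=1$ you already get Definition~\ref{local_control} for every $\Delta$ at once, as with hard reset. Taking $L=\Delta$ therefore buys a longer, input-adaptive effective window, not the property itself.

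Finally, you do not need the bounded-input fallback, which would weaken the claim: Eq.~\ref{eq: dynamic_decay_general_form} permits $\alpha_t=0$ exactly.
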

\vspace{-8pt}
\begin{proof}
Firstly, with dynamic decay, the iteration form of the membrane potential is:
\begin{align}
    &H_t=\alpha_t H_{t-1}+(1-\alpha_t)X_t. \label{eq:dynamic_decay_iterative}
\end{align}
which can be rewritten as
\begin{align}
    &H_t=\sum_{i=1}^{t}\left(\prod_{j=i+1}^t{\alpha_j}\right)(1-\alpha_i)X_i.\label{eq:dynamic_decay_noniterative}
\end{align}
Note that the coefficient of $X_i$ is input-dependent, which implies that the combination of $X_i$ is not actually a linear term. From Definition \ref{non-linearity}, we conclude that dynamic decay introduces nonlinearity.

Next, we show how $\mathbf{\alpha}_t$ controls the membrane potential $H_t$. Given $\Delta \in \mathbb{N}^+$, suppose $H_{t-\Delta}\geq V_{\text{th}}$ and $X_{t-\Delta+1}, ..., X_t<V_{\text{th}}/\Delta$. Note that when
\begin{align}
    \alpha_{t-\Delta+1}<\frac{V_{\text{th}}-X_{t-\Delta+1}}{H_{t-\Delta}-X_{t-\Delta+1}} \in (0,1]\label{eq:dynamic_decay_delta_short_condition}
\end{align}
We have
\begin{equation}
\begin{aligned}
    H_{t-\Delta+1}&=\alpha_{t-\Delta+1} H_{t-\Delta}+(1-\alpha_{t-\Delta+1})X_{t-\Delta+1}\\
    &=\alpha_{t-\Delta+1}(H_{t-\Delta}-X_{t-\Delta+1})+X_{t-\Delta+1}\\
    &<V_{\text{th}}-X_{t-\Delta+1}+X_{t-\Delta+1}=V_{\text{th}}. \label{eq:proof_01}
\end{aligned}
\end{equation}
For $m = 2,3,...,\Delta$, we sequentially derive
\begin{equation}
\begin{aligned}
    H_{t-\Delta+m}&=\alpha_{t-\Delta+m} H_{t-\Delta+m-1}+(1-\alpha_{t-\Delta+m}){X}_{t-\Delta+m} \\
    &< \alpha_{t-\Delta+m} V_{\text{th}}+(1-\alpha_{t-\Delta+m})V_{\text{th}}=V_{\text{th}}
\end{aligned}
\end{equation}
When $m=\Delta$, $H_t<V_{\text{th}}$. From Definition \ref{local_control}, every $\alpha_t$ satisfying Eq. \ref{eq:dynamic_decay_delta_short_condition} can guarantee $\Delta$-short control ability and avert continuous firing when inputs are smaller than threshold.

For long control ability, suppose that $\{X_t\}$ has an upper bound $C$, i.e., $X_i\leq C, i=1,2,...,t$. It is easy to get that $H_1=(1-\alpha_1)X_1\leq C$. Besides, if $H_{t-1} \leq C$, then
\begin{equation}
\begin{aligned}
    H_{t}&=\alpha_{t} H_{t-1}+(1-\alpha_{t}){X}_{t} \leq\alpha_{t} C+(1-\alpha_{t})C=C
\end{aligned}
\end{equation}
By mathematical induction, we know that $\{H_t\}$ has an upper bound $C$. From Definition \ref{global_control}, $\alpha_t$ has long control ability. 

\end{proof}
\noindent\textit{Remark}: Dynamic decay surpasses the reset mechanism in both introducing nonlinearity and controlling the membrane potential. On one hand, the nonlinearity of dynamic decay is governed by the input-dependent coefficient $\alpha_t$, which can theoretically vary freely within [0,1]. In contrast, the nonlinearity in the reset mechanism is discrete and limited (either a complete reset to zero or a fixed subtraction). On the other hand, dynamic decay provides more flexible control over the membrane potential. In particular, regarding $\Delta$-short control, it allows a defined upper bound $\Delta$ for an arbitrarily large membrane potential, while enabling the actual duration of influence within this window to adapt dynamically based on the input. In the above proof, we derived the condition under which a large membrane potential has an effective duration of 1 (Eq. \ref{eq:dynamic_decay_delta_short_condition}). In fact, we can further generalize this result to obtain the condition for an actual duration of $\tau \in [1, \Delta]$:
\begin{equation}
    \begin{aligned}
        \begin{cases}
        \displaystyle \alpha_{t-\Delta+i}\ge \frac{V_{\text{th}}-X_{t-\Delta+i}}{H_{t-\Delta+i-1}-X_{t-\Delta+i}}, &i=1,2,...,\tau-1\\[10pt]
        \displaystyle \alpha_{t-\Delta+i}<   \frac{V_{\text{th}}-X_{t-\Delta+i}}{H_{t-\Delta+i-1}-X_{t-\Delta+i}}, &i=\tau
        \end{cases}.
    \end{aligned}
\end{equation}
In this case, $H_{t-\Delta}, H_{t-\Delta+1}, ..., H_{t-\Delta+\tau-1}\ge V_{\text{th}}$ and $H_{t-\Delta}$ triggers a total of $\tau$ spikes.

\subsection{Dynamic Decay in Integer-valued Training Case}\label{detail_integer_spike_proof}

When the integer-valued training technique is introduced, dynamic decay is still able to retain the two functions of the reset mechanism. According to Proposition 1 of \cite{yao2025scaling}, integer-value output (with upper bound $N$) is equal to the sum of spikes generated by IF-SR (IF with Soft Reset) spiking neuron with $N$ timesteps. Therefore, functions of the reset mechanism are still preserved at the single-neuron level. Consequently, Proposition \ref{equivalent_function_reset_dynamic_decay} should still hold in the integer spike scenario.

In fact, assuming the integer spiking function $f(H_t)=\lfloor\mathrm{Clip}(H_t,0,N)\rfloor$ and $V_{\mathrm{th}}=1$, where $\mathrm{Clip}(x,0,N)$ means clipping the input $x$ to the range [0, $N$], and $\lfloor.\rfloor$ is the floor function. Since the functions of non-linearity and membrane potential control in dynamic decay are independent of the choice of the spike firing function, this concludes our functional analysis in Proposition \ref{equivalent_function_reset_dynamic_decay} in integer-valued training case.

\section{Experimental Details}\label{sec: detail_experiment}

In this work, we set DSN hyperparameters $N=4$, $k=4$, and $\tau=0.25$. The dynamic decay form of DSN matches the HGRN operator in flash-linear-attention library\footnote{\url{https://github.com/fla-org/flash-linear-attention}}. Therefore, in this paper, we leverage this Triton operator to enable parallel training of DSN.

\subsection{Training Efficiency and Extrapolation}
\textbf{Training Efficiency.} We evaluate the training efficiency of different parallelizable spiking neurons by measuring the average runtime of 100 forward and backward passes under sequence lengths from 2k to 16k. We set the batch size processed by the spiking neurons to 16 and the number of spiking neurons to 512. For sliding PSN \citep{fang2023parallel}, we set the sliding window $k=64$.

\textbf{Extrapolation.} We conduct experiments on WikiText-103 \citep{merity2017pointer} dataset using SpikingSSM \citep{shen2025spikingssms} architecture with different spiking neurons. Models are trained on sequences of length 2k and evaluated on sequences of length 1k to 30k. During the testing phase, to allow the construction of sufficiently long sequences on the test set, we set the batch size to 1. The remaining experimental settings follow \ref{sec: wikitext-103}. For masked PSN \citep{fang2023parallel}, We align the order $k$ with the training length. For sliding PSN \citep{fang2023parallel}, we set the sliding window $k=32$.

\subsection{Sequential CIFAR, ImageNet and CIFAR10-DVS}\label{detail_scifar_experiment}

\textbf{Sequential CIFAR.} We use the width of the image as the sequence length $(L=32)$ to obtain a serialized version of CIFAR dataset. The model architecture is consistent with that of PSN \citep{fang2023parallel} as detailed in Table \ref{tab: scifar model configurations}. For hyperparameter settings, the training is conducted over 256 epochs with a cosine decay learning rate schedule, starting at a maximum of 0.001. We set the batch size to 128 and select AdamW optimizer \citep{loshchilov2019decoupled} with zero weight decay. 
\begin{table*}[ht]
\caption{Configurations of Conv-based SNNs for Sequential CIFAR dataset. BN: BatchNorm, FC: Fully Connected.}
\label{tab: scifar model configurations}
\vspace{4pt}
\centering
\renewcommand{\arraystretch}{1.35}
\resizebox{\linewidth}{!}{
\begin{tabular}{c|c|c}
\hline
Stage & Layer Specification & Configuration \\ \hline
\multirow{2}{*}{1} & Conv1D-BN-DSN Block $\times$ 3 & Conv: (3, stride=1, padding=1), Dim: 128 \\
 & Average Pooling & Feature size: 32 $\rightarrow$ 16 \\ \hline
\multirow{2}{*}{2} & Conv1D-BN-DSN Block $\times$ 3 & Conv: (3, stride=1, padding=1), Dim: 128 \\
 & Average Pooling & Feature size: 16 $\rightarrow$ 8 \\ \hline
3 & Flatten-FC1-DSN-FC2 & FC1: 1024 $\rightarrow$ 256, FC2: 256 $\rightarrow$ class\_num \\
\hline
\end{tabular}
}
\end{table*}

\textbf{ImageNet and CIFAR10-DVS.} For ImageNet, our experimental setup is identical to that of PSN  \citep{fang2023parallel}. For CIFAR10-DVS, we use AdamW \citep{loshchilov2019decoupled} as the optimizer with a learning rate of 0.001, while keeping all other settings consistent with PSN.

\subsection{Time-series Forcasting Tasks}\label{detail_timeseries_forecasting}
We rely on SnnTorch \citep{Eshraghian2023training} and SpikingJelly \citep{fang2023spikingjelly} to build the baseline networks. For SNNs with LIF neurons, we set the training timestep $T=4$, the threshold $V_{\text{th}}=1.0$, and the decay rate $\beta=0.99$. For SNNs with DSN neurons, thanks to integer-valued training techniques, we do not directly expand timesteps to perform 0-1 encoding for temporal tasks. Instead, $N=4$ is regarded as the expanded 4 timesteps. The architecture and size of DSN-based model are aligned with \cite{lv2024efficient}. For training hyperparameters, we use a batch size of 128 and employ Adam optimizer \citep{diederik2015adam} with a learning rate of $1 \times 10^{-4}$. An early stopping strategy is implemented with a tolerance of 30 epochs. The experiments are conducted using 4 NVIDIA RTX A6000 GPUs.

To assess the performance of our model, we use the Root Relative Squared Error (RSE) and the coefficient of determination ($\text{R}^2$), defined as follows:
\begin{align}
    \mathrm{RSE}&=\sqrt{\frac{\sum_{m=1}^M||\mathbf{Y}^m-\hat{\mathbf{Y}}^m||^2}{\sum_{m=1}^M||\mathbf{Y}^m-\bar{\mathbf{Y}}||^2}}, \\
    \mathrm{R}^2&=\frac1{MCL}\sum_{m=1}^M\sum_{c=1}^C\sum_{l=1}^L\left[1-\frac{(Y^m_{c,l}-\hat{Y}^m_{c,l})^2}{(Y^m_{c,l}-\bar{Y}_{c,l})^2}\right].
\end{align}
In these formulas, $M$ is the size of the test set, $C$ is the number of channels, and $L$ is the prediction length. $\bar{\mathbf{Y}}$ represents the average of $\mathbf{Y}^m$. $Y^m_{c,l}$ denotes the $l$-th future value of the $c$-th variable in the $m$-th sample, while $\bar{Y}_{c,l}$ is its average across all samples. $\hat{\mathbf{Y}}^m$ and $\hat{Y}_{c, l}^{m}$ denote the ground truth values. Unlike Mean Squared Error (MSE) or Mean Absolute Error (MAE), these metrics are less sensitive to the absolute scale of the dataset, making them particularly well suited for time-series forecasting tasks.

Regarding the improvements in Spikformer \citep{zhou2023spikformer}, in addition to replacing the spiking neurons, we also make architectural modifications to achieve better performance. Specifically, we expand the first DSN in the Feed-Forward Network (FFN) block to an enhanced DSN mentioned in Sec. \ref{sec: methods} to improve the interaction between different neuron channels. However, this increases the total number of parameters in the FFN block by $16C^2$, where $C$ is the number of channels. To maintain the same total parameter count $8C^2$ of the FFN block, we reduce the expansion ratio of the linear mapping from the usual 4 to 2. The architecture of the FFN block before and after modification is shown in Fig. \ref{fig: spikformer_dsn_FFNblock}.

\begin{figure}[ht]
\vskip 0.2in
\vspace{-16pt}
\begin{center}
\centerline{\includegraphics[width=0.6\columnwidth]{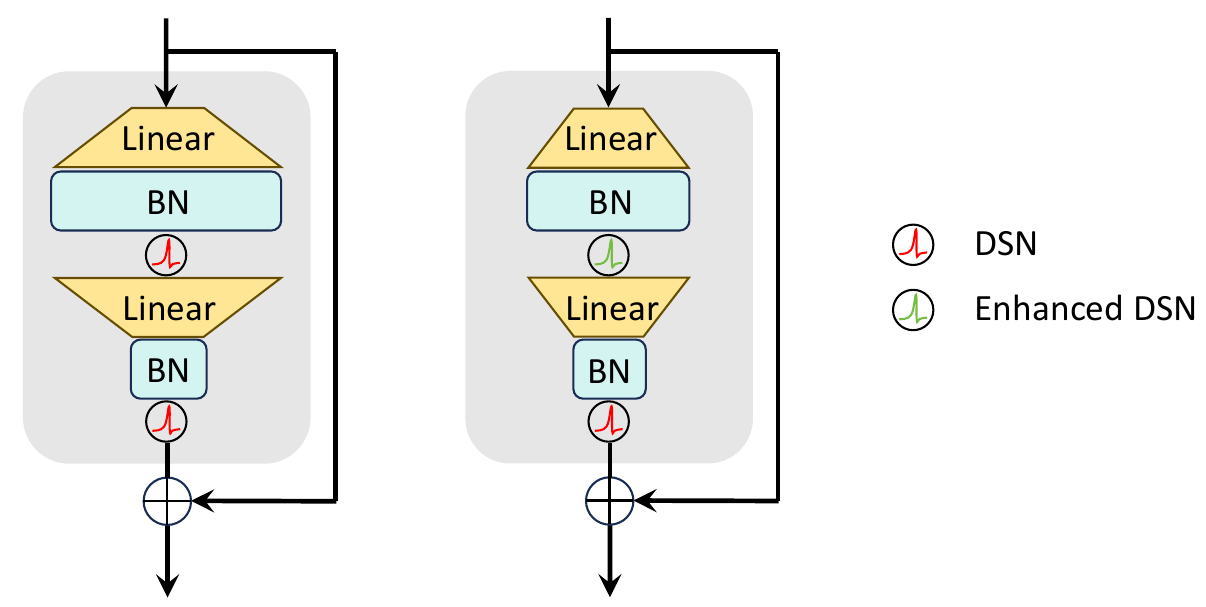}}
\vspace{-5pt}
\caption{The FFN block in Spikformer with DSN before (left) and after (right) modification. BN: BatchNorm.}
\label{fig: spikformer_dsn_FFNblock}
\end{center}
\vskip -0.2in
\vspace{-5pt}
\end{figure}

\subsection{Reinforcement Learning}\label{app:rl}

Baseline results are reproduced by strictly following the protocol of \citet{xu2025proxy}.
All results are reported as the maximum average return over five random seeds.
For fair comparison, SNN models with DSN neurons share the same architecture as the PT-LIF models in \citet{xu2025proxy}, and all experiments are conducted using the configurations released in the official implementation on Github\footnote{\url{https://github.com/xuzijie32/Proxy-Target}}.

\subsection{WikiText-103}\label{sec: wikitext-103}
Our experiment is implemented on 8 NVIDIA A800 GPUs. The network architecture and hyperparameters are largely based on S4 \citep{gu2022efficiently} and SpikingSSM \citep{shen2025spikingssms}, as shown in Table \ref{tab: wikitext configurations}. The key difference is that we shorten the length of the training text to 2048. To maintain the number of tokens per training step, we increase the batch size per GPU to 4. 
\begin{table}[ht]
\caption{Configurations of DSN-based language model on WikiText-103.}
\label{tab: wikitext configurations}
\vspace{4pt}
\centering
\renewcommand{\arraystretch}{1.15}
\begin{tabular}{c|c}
\hline
Configurations & WikiText-103  \\ \hline
Layer Depth & 16  \\
Model Dimension & 1024  \\ 
Learning Rate & 5e-4  \\
Learning Rate Schedule & Cosine Decay, with 500 warmup steps  \\
Optimizer & AdamW \citep{loshchilov2019decoupled}\\
Weight Decay & 0.01 \\
Batch Size per GPU & 8  \\ 
Epochs & 100  \\
\hline
\end{tabular}
\end{table}

\subsection{Analysis of Energy Consumption}\label{detail_energy_consumption}

We follow the method in \citep{yao2023sdsa} to evaluate the energy consumption of the Sequential CIFAR network using different spiking neurons. Specifically, the energy consumption for floating-point operations (FLOPs) is calculated by $E_{MAC} \cdot FLOPs$, while the energy consumption for spike-based operations is calculated by $E_{AC} \cdot T \cdot R \cdot FLOPs$. Here, $E_{MAC}=4.6pJ$ and $E_{AC}=0.9pJ$ in 45nm technology. $T$ denotes timestep and $R$ denotes the spike firing rate. The FLOPs of the $n$-th Conv1D layer are $k_n \cdot d_n \cdot c_{n-1} \cdot c_n$, where $k_n$ is the kernel size, $d_n$ is the sequence channel number, $c_{n-1}$ and $c_n$ are the input and output convolution channel numbers, respectively. The FLOPs of the $m$-th fully connected layer are $i_m \cdot o_m$, where $i_m$ and $o_m$ are the input and output channels of the layers. 

According to \citep{tang2025sorbet}, the Sigmoid function can be implemented using two shift operations and one addition. We scaled the energy consumption of shift operations reported in \citep{you2020shiftaddnet} proportionally to a 45nm technology, yielding an energy cost of $0.72pJ$ per shift. Therefore, the energy consumption of a single sigmoid computation can be reduced to $2*0.72+0.9=2.34pJ$.

The energy consumption from LIF neurons itself is usually considered negligible compared to that of the network architecture, including convolution and fully connected layers. In contrast, PSN and DSN have more complex internal structures, leading to non-negligible energy consumption. To ensure fairness, we present a statistical method for FLOPs within spiking neurons and summarize it in Table \ref{tab: methods of flops within spiking neuron}. 

\begin{table}[t]
\caption{Statistical methods of FLOPs within spiking neurons. $c$: number of spiking neuron. $T$: Timestep. $k$: kernel size of causal convolution.}
\label{tab: methods of flops within spiking neuron}
\vspace{4pt}
\centering
\renewcommand{\arraystretch}{1.15}
\begin{tabular}{c|c|c}
\hline
Spiking Neuron & Internal Structure & FLOPs  \\ \hline
LIF \citep{abbott1999lapicque} & Update of Membrane Potential & $c \cdot T$  \\ \hline
PSN \citep{fang2023parallel} & Update of Membrane Potential & $c \cdot T^2$  \\ \hline
sliding PSN \citep{fang2023parallel} & Update of Membrane Potential & $0.5c \cdot T^2$  \\ \hline
\multirow{3}{*}{\bf DSN (Ours)} & Causal Conv1D & $k \cdot c \cdot T$  \\
 & Sigmoid Function & $c \cdot T$  \\
 & Update of Membrane Potential & $c \cdot T$  \\
\hline
\end{tabular}
\end{table}

The spike firing rates of different layers\footnote{Notably, the input to the first convolutional layer are floating-point numbers of the original sequence, rather than processed spikes. Therefore, this layer is not involved in the calculation of the spike firing rate.} in Conv-based SNN for Sequential CIFAR using different spiking neurons are presented in Table \ref{tab: spike firing rate}. Our DSN exhibits a lower spike firing rate than that of LIF, which helps offset the additional energy cost introduced by the dynamic decay module.

\begin{table}[t]
\caption{Spike firing rates of Conv-based SNN for Sequential CIFAR10 and CIFAR100. Convx: Conv1D of the x-th layer. FC: Fully Connected.}
\label{tab: spike firing rate}
\vspace{4pt}
\centering
\renewcommand{\arraystretch}{1.15}
\resizebox{\linewidth}{!}{
\begin{tabular}{cc|ccccccc|c}
\hline
Dataset & Methods & Conv2 & Conv3 & Conv4 & Conv5 & Conv6 & FC1 & FC2 & Average\\ \hline
\multirow{4}{1.8cm}[-0.0ex]{\centering Sequential \\ CIFAR10} & LIF \citep{abbott1999lapicque} & 0.1511 & 0.1422 & 0.1811 & 0.1553 & 0.1457 & 0.0926 & 0.0647 & 0.1499  \\ 
& PSN \citep{fang2023parallel} & 0.2200 & 0.3101 & 0.1575 & 0.1542 & 0.1516 & 0.1439 & 0.1239 & 0.2143  \\ 
& sliding PSN \citep{fang2023parallel} & 0.1792 & 0.1875 & 0.1297 & 0.2538 & 0.1923 & 0.0764 & 0.1172 & 0.1820  \\ 
& \bf DSN (Ours) & 0.1349 & 0.1337 & 0.1301 & 0.1301 & 0.0982 & 0.0380 & 0.0484 & \bf 0.1238   \\ \hline
\multirow{4}{1.8cm}[-0.0ex]{\centering Sequential \\ CIFAR100} & LIF \citep{abbott1999lapicque} & 0.2264 & 0.1281 & 0.1881 & 0.1581 & 0.1561 & 0.1018 & 0.1584 & 0.1697  \\ 
& PSN \citep{fang2023parallel} & 0.3221 & 0.2127 & 0.1887 & 0.1682 & 0.1509 & 0.1735 & 0.1458 & 0.2226  \\ 
& sliding PSN \citep{fang2023parallel} & 0.1988 & 0.2042 & 0.1394 & 0.2653 & 0.1551 & 0.0827 & 0.1888 & 0.1900  \\ 
& \bf DSN (Ours) & 0.1384 & 0.1420 & 0.1404 & 0.1349 & 0.1240 & 0.0362 & 0.0973 & \bf 0.1324   \\
\hline
\end{tabular}
}
\end{table}

\subsection{Approximation Experiment}\label{detail_approximation_experiment}

Dynamic decay adaptively retains part of historical information stored in the membrane potential based on changing input. From the perspective of approximation, dynamic decay is powerful to simulate the behaviors of spiking neurons with various internal structures. During training, the spiking neuron learns to construct different reset mechanisms to model different input by regulating decay. For example, if the information is better suited to be encoded in the form of hard reset, the spiking neuron only needs to approximate a binary classifier to decide whether to set $\alpha_t$ to be constant $\beta$ or 0. This plasticity of dynamics potentially breeds rich memory abilities. To verify the expressiveness of spiking neurons with dynamic decay, we design an experiment of using dynamic decay to approximate the behaviors of multiple LIF neurons with hard or soft reset.

\textbf{Overview.} To begin with, we manually construct two distinct datasets with a timestep of $T=128$ named A and B, and split them into training and test set. Dataset A has input signals following a normal distribution with parameters $(\mu, \sigma^2)$, while dataset B is a collection of more regular signals including sine functions, sigmoid functions, step functions and Poisson encoding with different parameters. Afterwards, these signals are input into 6 LIF neurons with different reset mechanisms and membrane time constants. Then, we apply dynamic decay across $C=6$ channels with the same input signals to approximate the membrane potential with that of the LIF neurons described above, using the Mean Squared Error (MSE) loss function. Lastly, we calculate the spike firing accuracy of the test set as evaluation metric. 

\textbf{Datasets.} The normal distribution parameters of Dataset A are $\mu=1,\sigma=2$. A total of 11,000 samples are collected, with a training-to-test ratio of $10:1$. The signal generation methods of Dataset B is shown in Table \ref{tab: dataset B configurations}. Each type of signal generates 200 samples (totally 800 samples), with 10\% randomly selected as test set and the remaining samples used for training.

\begin{table}[ht]
\caption{The signal generation methods of Dataset B. $x=0,1,...,T-1.$ The notation $[a:b:c]$ means selecting $c$ evenly spaced values from $a$ to $b$. For example, $[5:15:5]$ is equal to $5,7.5,10,12.5,15$. Different parameters can be combined with each other to obtain samples with different characteristics, with the corresponding $c$ multiplied. For Poisson Encoding, $\mathrm{Random}(\cdot)$ denotes the random sampling of a floating-point number from the interval $[0,1]$, and each set of parameters is repeated 8 times to generate 8 samples.}
\label{tab: dataset B configurations}
\vspace{4pt}
\centering
\renewcommand{\arraystretch}{1.45}
\resizebox{\linewidth}{!}{
\begin{tabular}{c|c|c}
\hline
Signal Type & Formulation & Specification  \\ \hline
Sine Function & $\mathrm{input}=A\sin(\omega x)+B$ & $\omega=2\pi \frac{C-1}{T-1},A=[-2:3:5], B=[-2:3:8],C=[5:15:5]$\\ \hline
Sigmoid Function & $\mathrm{input}=A \cdot\mathrm{Sigmoid}(x')$ & $x'=\frac{20}{T-1}x-10+B,A=[-2:5:10], B=[10:10:20]$\\ \hline
Step Function & $\mathrm{input}=A \cdot\mathrm{Heaviside}(x')$ & $x'=x-B,A=[-2:5:10], B=[0:T:20]$\\ \hline
Poisson Encoding & $\mathrm{input}=A \cdot \mathrm{Heaviside}(p)$ & $p=\mathrm{Random}(x)-p_0,A=[-1:5:5], p_0=[0.3:1:5]$\\ 
\hline
\end{tabular}
}
\end{table}

\textbf{Spiking Neurons.} We set the threshold of LIF neurons to be 1, and the structure of dynamic decay is as follows:
\begin{align}
    \mathbf{X}_{t}'&=\mathrm{CausalConv1D_{up}}(\mathbf{X}_{t-k+1:t})\\
    \mathbf{X}_{t}''&=\mathrm{ReLU}(\mathbf{X}')\\
    \mathbf{X}_{t}'''&=\mathrm{CausalConv1D_{down}}(\mathbf{X}_{t-k+1:t}'')\\
    \bm{\alpha}_{t}&=\mathrm{Sigmoid}(\mathbf{X}_t''')^{1/\tau}
\end{align}
Here, $\mathbf{X}_t \in \mathbb{R}^{C\times 1}$, and $\mathbf{X}_{t-k+1:t}$ denotes $k$ inputs from $\mathbf{X}_{t-k+1}$ to $\mathbf{X}_t$. The function $\mathrm{CausalConv1D(\cdot)}$ is causal 1D convolution and the indices $\mathrm{up}$ and $\mathrm{down}$ represent the expansion of the input channels from $C$ to $eC$, and the reduction from $eC$ to $C$, respectively. $\tau$ is a hyperparameter to fine-tune $\bm{\alpha}_t$. In this experiment, we set $k=e=8$, and $\tau=0.5$.

\textbf{Training.} We set a batch size of 128 and employ Adam optimizer \citep{diederik2015adam} with a cosine decay schedule whose peak learning rate is $1 \times 10^{-2}$. The training epochs are 100. To align with the techniques used in the main text, we also conducted experiments using dynamic decay to approximate integer-valued spikes (we set $N=4$). Since integer-valued spikes are only meaningful in the case of soft reset \citep{yao2025scaling}, we fit only LIF neurons with soft reset in this case.

\textbf{Results and Discussions.} Results in Table \ref{tab: Approximation results} and Fig. \ref{fig: Approximation results} show that dynamic decay generally fits well to LIF neurons with different reset structures under various types of input signals, indicating its potential for expressiveness. When the spikes become integers, the fitting accuracy of dynamic decay further improves on both datasets, supporting our view that the integer-valued training technique and dynamic decay have a complementary effect in terms of expressiveness. Specifically, there are two details that merit our attention: 

\begin{itemize}
\item As the membrane time constant increases, the fitting accuracy declines. This could be due to the growing influence of historical information on the integration mechanism of the spiking neuron, and modeling such information has always been a challenging task. However, in the current modeling of the LIF model, the value of $\tau_m$ typically does not exceed the range specified in Table \ref{tab: Approximation results} (usually $\tau_m=2$ in \citep{yao2023sdsa, yao2025scaling}), and our focus is on more general fitting scenarios. Additionally, the introduction of integer-valued spikes can significantly suppress this fitting error.
\item When the membrane potential approaches the threshold, the error between the membrane potential predicted by dynamic decay and the true value generated by the LIF neuron is small for noise that follows a normal distribution. However, for a sine wave signal, the error between the two is larger (see Fig. \ref{fig: Approximation results}). We speculate that the cause of this difference lies in the fact that the proportion of data near the threshold is smaller for the sine signal compared to the noise signal with a mean $\mu$ equal to the threshold. This makes it more difficult for dynamic decay to learn how to handle membrane potential fluctuations near the threshold.
\end{itemize}


\begin{table}[ht]
\small
\vspace{-6pt}
\caption{Experimental results of applying dynamic decay to approximate various LIF neurons with reset mechanisms on manually constructed datasets with different signals. Each channel of the parallel spiking neuron with dynamic decay is fitted with a LIF neuron. We report the spike firing accuracy (\%) across 6 different channels and average them. *: results with integer-valued spike.
}
\label{tab: Approximation results}
\vspace{6pt}
\centering
\renewcommand{\arraystretch}{1.15}
\setlength{\tabcolsep}{3pt}
\resizebox{\linewidth}{!}{
\begin{tabular}{cccccc}
\toprule
Channel & LIF neurons to fit & Dataset A & Dataset B & Dataset A* & Dataset B*\\
\midrule
$1$ & hard reset, $\tau_m=4/3$ & 99.49 & 98.36 & --- & ---\\
$2$ & hard reset, $\tau_m=2$ & 95.10 & 95.50 & --- & ---\\
$3$ & hard reset, $\tau_m=4$ & 85.87 & 90.18 & --- & ---\\
$4$ & soft reset, $\tau_m=4/3$ & 99.03 & 98.20 & 99.30 & 99.01\\
$5$ & soft reset, $\tau_m=2$ & 93.87 & 96.40 & 98.50 & 98.14\\
$6$ & soft reset, $\tau_m=4$ & 84.83 & 91.65 & 97.59 & 97.82\\ \midrule
Average & --- & 92.97 & 95.05 & 98.46 & 98.32\\
\bottomrule
\end{tabular}
}
\vspace{-8pt}
\end{table}

\begin{figure}[ht]
\vskip 0.2in
\vspace{-8pt}
\begin{center}
\centerline{\includegraphics[width=0.6\columnwidth]{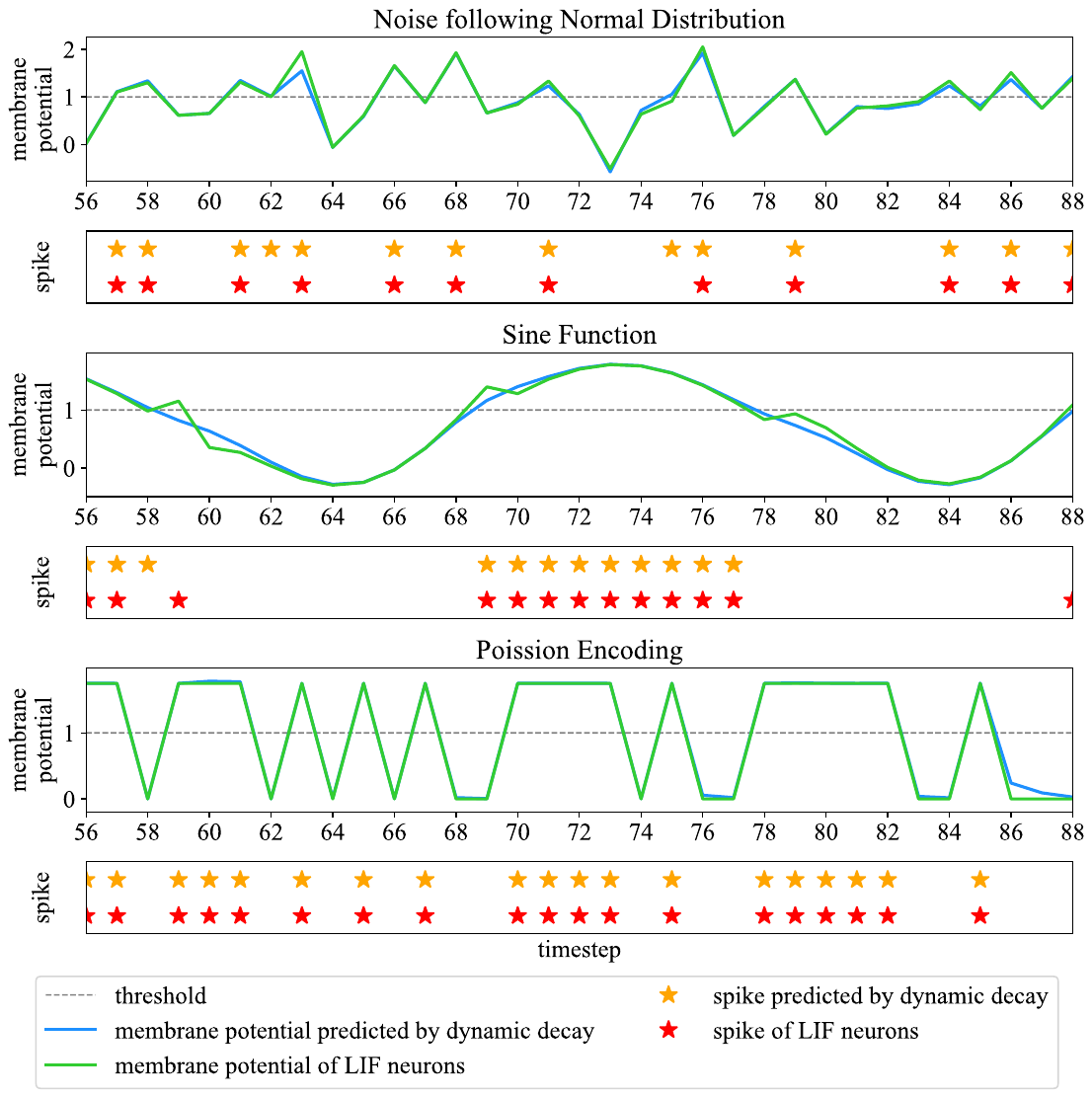}}
\vspace{-5pt}
\caption{Signal responses including membrane potential and spike for LIF neuron and its dynamic decay prediction on channel 2. Subplots from top to bottom depict the responses to a noise following normal distribution, sine function, and Poisson encoding.}
\label{fig: Approximation results}
\end{center}
\vskip -0.2in
\vspace{-5pt}
\end{figure}

\end{document}